\newcolumntype{C}{>{\Centering\arraybackslash}X} 
\title{Quadrotor Stabilization with Safety Guarantees: A Universal Formula Approach}
\author{Ming Li$\,\,\emph{Student Member, IEEE}$, Zhiyong Sun$ \,\,\emph{Member, IEEE}$, and Siep~Weiland} 
\begin{document}
\newtheorem{Thm}{\textbf{Theorem}}
\newtheorem{Lem}{\textbf{Lemma}}
\newtheorem{Def}{\textbf{Definition}}
\newtheorem{Rem}{\textbf{Remark}}
\newtheorem{Exam}{\textbf{Example}}
\newtheorem{Sup}{\textbf{Assumption}}
\newtheorem{Cor}{\textbf{Collary}}
\newtheorem{Asum}{\textbf{Assumption}}
\newtheorem{Expl}{\textbf{Explanation}}
\newtheorem{Prop}{\textbf{Proposition}}
\newtheorem{Rmk}{\textbf{Remark}}
\maketitle
\begin{abstract}
Safe stabilization is a significant challenge for quadrotors, which involves reaching a goal position while avoiding obstacles. Most of the existing solutions for this problem rely on optimization-based methods, demanding substantial onboard computational resources. This paper introduces a novel approach to address this issue and provides a solution that offers fast computational capabilities tailored for onboard execution. Drawing inspiration from Sontag's universal formula, we propose an analytical control strategy that incorporates the conditions of control Lyapunov functions (CLFs) and control barrier functions (CBFs), effectively avoiding the need for solving optimization problems onboard. Moreover, we extend our approach by incorporating the concepts of input-to-state stability (ISS) and input-to-state safety (ISSf), enhancing the universal formula's capacity to effectively manage disturbances. Furthermore, we present a projection-based approach to ensure that the universal formula remains effective even when faced with control input constraints. The basic idea of this approach is to project the control input derived from the universal formula onto the closest point within the control input domain. Through comprehensive simulations and experimental results, we validate the efficacy and highlight the advantages of our methodology.
\let\thefootnote\relax\footnotetext{This work was supported in part by a starting grant from Eindhoven Artificial Intelligence Systems Institute (EAISI), The Netherlands. \emph{(Corresponding author: Zhiyong~Sun.)}

The authors are with the Department of Electrical Engineering, Eindhoven University of Technology, and also with the Eindhoven Artificial Intelligence Systems Institute, PO Box 513, Eindhoven 5600 MB, The Netherlands. 
{\tt\small \{ m.li3, z.sun, s.weiland \}@tue.nl}
}
\end{abstract}
\begin{IEEEkeywords}
Safe Stabilization, Universal Formula, Onboard Implementation, Quadrotors.
\end{IEEEkeywords}
\section{Introduction}
A quadrotor is comprised of helicopters with four rotors, establishing a distinctive and favored category within the realm of unmanned aerial vehicle platforms. Renowned for its cost-efficient construction, ease of maintenance, and remarkable maneuverability, this platform has garnered substantial attention with numerous applications across various fields such as environmental monitoring ~\cite{UAV_Environment}, agriculture ~\cite{UAV_Agriculture}, and search and rescue operations ~\cite{UAV_Rescue}. However, despite boasting six Degrees of Freedom (DOFs), a quadrotor operates with only four independent thrust forces, rendering it an underactuated system ~\cite{Underactuated_Reasoning}. This inherent characteristic, coupled with its intricate nonlinear dynamics, strong coupling, and multi-variable actuation, renders the task of quadrotor control exceedingly challenging.

Safe stabilization, which entails directing the system towards a desired state while avoiding entering into unsafe regions, presents a persistent and significant challenge within quadrotor applications~\cite{quadrotor_safe_stabilization1,quadrotor_safe_stabilization2,quadrotor_safe_stabilization3}. In addressing this issue, the classic stabilization technique of Control Lyapunov Functions (CLFs)~\cite{Global_Stabilizable} and the more recent safety-critical control method known as Control Barrier Functions (CBFs)~\cite{Zeroing_CBF} are frequently combined via different techniques. For example, in~\cite{CLBF}, a CLF and a CBF are combined into a control Lyapunov barrier function (CLBF), and then a feedback control law is constructed using Sontag's universal formula~\cite{Sontag_fromula} for safe stabilization. Although this approach is commended for its simplicity, it faces challenges in meeting the existence conditions of a CLBF~\cite{Property_on_CLBF}. Besides, it may fail to guarantee stability and safety simultaneously in some proximity to the boundary of a safety set, even when the conditions of both CLFs and CBFs are satisfied~\cite{Comment_on_CLBF}. To overcome this limitation, the optimization-based methods are proposed, including nonlinear model predictive control (MPC) with state constraints~\cite{crazyflie_mpc_1,crazyflie_mpc_2}, MPC integrated with CBFs~\cite{crazyflie_mpc_CBF}, and a combination of CLF and CBF with a quadratic program (QP) formulation~\cite{CBF_Definition}. Despite the demonstrated efficacy of these methodologies, limitation arises from the necessity to solve optimization problems, rendering it infeasible to perform real-time computations directly on quadrotors with constrained computation resources. 

To handle the computation requirements for quadrotor stabilization with safety guarantees, the existing optimization-based approaches usually adopt 
a position-attitude controller architecture~\cite{Cascaded_Structure}. It combines both a planning step for generating setpoints and a real-time onboard control step, which enables a fast response to safe stabilization tasks. In this setup, a remote ground station computer is utilized to solve an MPC optimization problem and generate setpoint commands~\cite{crazyflie_mpc_1,crazyflie_mpc_2}. Subsequently, the quadrotor executes tracking tasks through a low-level onboard controller, such as PID controller. This framework has also been utilized in other studies, as seen in~\cite{quadrotor_safe_stabilization3} and~\cite{QP_Two_Layer_IMPLEMENTATION}. Compared to the approaches in~\cite{crazyflie_mpc_1,crazyflie_mpc_2}, the primary distinction of the solutions in~\cite{quadrotor_safe_stabilization3,QP_Two_Layer_IMPLEMENTATION} is that they use a CLF-CBF-QP approach to generate setpoint commands rather than MPC. However, this configuration does not represent an onboard implementation since the safe stabilization is primarily achieved on the remote computer while the quadrotor's role lies in trajectory tracking. Alternatively, some approaches involve solving nonlinear MPC directly onboard using an embedded computer, as demonstrated in \cite{On_board_MPC}. However, as onboard control requires a microprocessor with substantial computational power, it is not suitable for certain resource-constrained platforms, such as the Crazyflie~\cite{Crazyflie}. To address this challenge, researchers have explored algorithms aimed at expediting optimization processes, as discussed in~\cite{fast_speed, Fast_Embbeded}. Nevertheless, the expedition solution still falls short of delivering satisfactory performance, particularly when dealing with severely resource-constrained platforms. Furthermore, hybrid computing techniques have been proposed in~\cite{Embedded_Implementation}. These techniques involve hardware modifications to the quadrotor, allowing the implementation of the nominal stabilizing control algorithm in digital form while solving safety-critical Quadratic Programming (QP) through a dedicated analog resistor array. Due to its requirement for hardware modifications tailored to specific optimization problems, hybrid computing is impractical for some resource-intensive applications. In contrast to the aforementioned approaches, an alternative solution is to design an analytical feedback control law, namely a universal formula~\cite{universal_formula,Smooth_universal}, to achieve safe stabilization. The universal formula provides a combination of the control laws obtained from CLF and CBF conditions, which can be regarded as an analytical solution to an optimization problem with CLF and CBF constraints. Consequently, it achieves equivalent performance to the optimization-based approaches without the necessity of solving an optimization problem. Essentially, the idea is similar to the explicit MPC~\cite{Explicit_MPC}, which pre-solves the optimization problem offline, significantly reducing the online calculation. In contrast to explicit MPC, which employs a mapping table to encode the state-control relations, the universal formula relies on an analytical expression to define this relationship.

In this paper, we propose to address the quadrotor safe stabilization problem under a position-attitude controller architecture. To achieve fast computation and onboard implementation on a quadrotor, the universal formula construction strategy is studied. The main contributions of this paper are summarized as follows.

\begin{itemize}
    \item  We propose a universal formula for the position controller to generate setpoints for the attitude controller that ensure simultaneous safety and stability. Attitude control is then accomplished using a PID controller responsible for trajectory tracking. As both controller computations are executed using the quadrotor's onboard resources, we achieve a real onboard safe stabilization. Furthermore, considering that the constraints of CLF and CBF might be conflicting, the universal formulas for both compatibility and incompatibility cases are discussed. 
    \item We extend our universal formula by incorporating the theories of input-to-state stability (ISS) and input-to-state safety (ISSf), accommodating disturbances and model uncertainties. This modification yields a robust safe stabilization while maintaining fast computational speed and on-board implementation. Furthermore, when dealing with input constraints, we introduce a projection-based technique. This technique entails projecting the control input obtained from the universal formula onto the nearest point within the control input domain, thereby ensuring compatibility of the ISS and ISSf constraints.
    \item We employ both simulations and real-world experiments to validate the effectiveness and highlight the advantages of our approach. The results demonstrate that our solution exhibits significantly faster execution, which makes it well-suited for onboard implementation while maintaining a satisfying performance. 
\end{itemize}

The paper is structured as follows. In Section II, we provide an overview of CLF and exponential CBF (ECBF), and we introduce Sontag's universal formula for achieving stabilization, safety-critical control, and safe stabilization control. In Section III, we develop a universal formula for quadrotor stabilization while ensuring safety. This section adopts the position-attitude control architecture, employing the universal formula in the position controller. In Section IV, we tailor the universal formula to accommodate practical safe stabilization affected by disturbances and constrained inputs. Section V is dedicated to the development of simulations and experimental results. Finally, in Section VI, we conclude this paper.
\section{Background Preliminaries}
Consider a dynamical system that has a control-affine structure~\cite{Dynamical_model}
	\begin{equation}\label{Affine_Control_System}
	    \dot{\mathbf{x}}=\mathbf{f}(\mathbf{x})+\mathbf{g}(\mathbf{x})\mathbf{u},
	\end{equation}
where $\mathbf{x}\in\mathbb{R}^{n}$ is the system state, $\mathbf{u}\in\mathbb{R}^{m}$ is the control input, and the vector fileds $\mathbf{f}:\mathbb{R}^{n}\rightarrow\mathbb{R}^{n}$ and $\mathbf{g}:\mathbb{R}^{n}\rightarrow\mathbb{R}^{n\times m}$ are locally Lipschitz continuous.
\subsection{CLF and Universal Formulas}
\begin{Def}
(CLF~\cite{Sontag_fromula}) A continuously differentiable, positive definite, and radially unbounded function $V: \mathbb{R}^{n}\rightarrow\mathbb{R}_{+}$ is a CLF for system \eqref{Affine_Control_System} if there exists a control $\mathbf{u}\in\mathbb{R}^{m}$ satisfying
\begin{equation}\label{CLF_Condition}
\begin{aligned}
& a(\mathbf{x})+\mathbf{b}(\mathbf{x}) \mathbf{u}\leq 0
\end{aligned}
\end{equation}
for all $\mathbb{R}^{n}\backslash\{ \mathbf{0}\}$, where $a(\mathbf{x})=L_{\mathbf{f}} V(\mathbf{x})+\lambda V(\mathbf{x})$, $\lambda>0$, and $\mathbf{b}(\mathbf{x})=L_{\mathbf{g}} V(\mathbf{x})$. $L_{\mathbf{f}}$ and $L_{\mathbf{g}}$ denote the Lie derivatives along $\mathbf{f}$ and $\mathbf{g}$, respectively.
\end{Def}
The goal is to obtain a locally Lipschitz continuous feedback control law $\mathbf{u}: \mathbb{R}^{n}\rightarrow\mathbb{R}^{m}$ that satisfies the condition~\eqref{CLF_Condition} for any $\mathbb{R}^{n}\backslash\{ \mathbf{0}\}$. Toward this objective, Sontag's universal formula~\cite{Sontag_fromula} can be employed
\begin{equation}\label{CLF_Sontag_Law}
\begin{split}
\mathbf{u}_{\mathrm{Stg-CLF}}^{\star}(\mathbf{x})=\left\{\begin{array}{cc}
\mathbf{m}_{\mathrm{Stg-CLF}}^{\star}(\mathbf{x}), & \mathbf{b}(\mathbf{x}) \neq \mathbf{0}, \\
\mathbf{0}, & \mathbf{b}(\mathbf{x})=\mathbf{0},
\end{array}\right.
\end{split}
\end{equation}
where  $\mathbf{m}_{\mathrm{Stg-CLF}}^{\star}(\mathbf{x})=-\frac{a(\mathbf{x})+\kappa\sigma(\mathbf{x})}{\mathbf{b}(\mathbf{x}) \mathbf{b}(\mathbf{x})^{\top}} \mathbf{b}(\mathbf{x})^{\top}$, $\sigma(\mathbf{x})=\sqrt{a^{2}(\mathbf{x})+\phi(\mathbf{x})\|\mathbf{b}(\mathbf{x}) \|^{4}}$, and $\phi(\mathbf{x})$ is a positive semi-definite function. Note that, in contrast to the universal formula presented in~\cite{Sontag_fromula}, we introduce an additional parameter $\kappa\geq 0$ to the universal formula. This parameter provides the flexibility to adjust the convergence of stabilization.
\subsection{Exponential CBF and Universal Formulas}
Consider a closed convex set $\mathcal{C}\subset\mathbb{R}^{n}$ as the $0$-superlevel set of a  continuously differentiable function $h:\mathbb{R}^{n}\rightarrow\mathbb{R}$, which is defined as
\begin{equation}\label{Invariant_Set}
		\begin{aligned}
			\mathcal{C} & \triangleq\left\{\mathbf{x}\in \mathbb{R}^{n}: h(\mathbf{x}) \geq 0\right\}, \\
			\partial \mathcal{C} & \triangleq\left\{\mathbf{x}\in\mathbb{R}^{n}: h(\mathbf{x})=0\right\}, \\
			\operatorname{Int}(\mathcal{C}) & \triangleq\left\{\mathbf{x}\in \mathbb{R}^{n}: h(\mathbf{x})>0\right\},
		\end{aligned}
\end{equation}
where we assume that $\mathcal{C}$ is nonempty and has no isolated points, that is, $\operatorname{Int}(\mathcal{C}) \neq \emptyset$ and $\overline{\operatorname{Int}(\mathcal{C})}=\mathcal{C}$. 
\begin{Def}\label{ECBF_Conditon}
(ECBF ~\cite{ECBF_Quadrotor}) Give a set $\mathcal{C}\subset\mathbb{R}^{n}$ defined as the superlevel set of a $r$-times continuously differentiable function $h: \mathbb{R}^{n}\rightarrow\mathbb{R}$, then $h$ is an ECBF for system  \eqref{Affine_Control_System} if there exists a control input $\mathbf{u}\in\mathbb{R}^{m}$ such that 
\begin{equation}\label{ECBF}
    \begin{split}
        &c(\mathbf{x})+\mathbf{d}(\mathbf{x})\mathbf{u}\geq 0,
    \end{split}
\end{equation}
where $c(\mathbf{x})=\mathcal{L}_{\mathbf{f}}^{r}h(\mathbf{x})+\mathbf{K}^{\top}\mathcal{H}$, and $\mathbf{d}(\mathbf{x})=\mathcal{L}_{\mathbf{g}}\mathcal{L}_{\mathbf{f}}^{r-1}h(\mathbf{x})$, $\mathcal{H}=[h(\mathbf{x}),\mathcal{L}_{\mathbf{f}}h(\mathbf{x}),\cdots,\mathcal{L}_{\mathbf{f}}^{r-1}h(\mathbf{x})]^{\top}$ is the Lie derivative vector for $h(\mathbf{x})$ and $\mathbf{K}=[k_{0},k_{1},\cdots,k_{r-1}]^{\top}\in\mathbb{R}^{r}$ is the coefficient gain vector for $\mathcal{H}$.
\end{Def}
In~\cite{ECBF_Quadrotor}, it has been demonstrated that the use of the ECBF condition guarantees the forward invariance of the safety set $\mathcal{C}$ for a system with a high relative degree constraint.

Similarly, the universal control law for CBF is given as 
\begin{equation*}\label{CBF_Universal_Law}
\begin{split}
\mathbf{u}_{\mathrm{Stg-CBF}}^{\star}(\mathbf{x})=\left\{\begin{array}{cc}
\mathbf{n}_{\mathrm{Stg-CBF}}^{\star}(\mathbf{x}), & \mathbf{d}(\mathbf{x})\neq\mathbf{0},\\ \mathbf{0},&\mathbf{d}(\mathbf{x})=\mathbf{0},
\end{array}\right.
\end{split}
\end{equation*}
where $\mathbf{n}_{\mathrm{Stg-CBF}}^{\star}(\mathbf{x})=\frac{\rho\Gamma(\mathbf{x})-c(\mathbf{x})}{\mathbf{d}(\mathbf{x}) \mathbf{d}(\mathbf{x})^{\top}}\mathbf{d}(\mathbf{x})^{\top}$, $\Gamma(\mathbf{x})=\sqrt{c^{2}(\mathbf{x})+\varphi(\mathbf{x})\|\mathbf{d}(\mathbf{x}) \|^{4}}$, and $\varphi(\mathbf{x})$ is a positive semi-definite function, and the parameter $\rho\geq 0$ is employed to determine the level of conservatism in the safety guarantees.
\subsection{Generalized Universal Formula for Safe Stabilization}\label{G_Universal}
\begin{Def}
(Compatibility) Consider a collection of sets $\mathcal{S}_{i}=\{\mathbf{u}\in\mathbb{R}^{m}:p_{i}(\mathbf{x})+\mathbf{q}_{i}(\mathbf{x})\mathbf{u}\leq 0, i=1,...,M\}$, where $p_{i}(\mathbf{x})$ and $\mathbf{q}_{i}(\mathbf{x})$ for $i=1,...,M$ are state dependent parameters for these inequalities. These inequalities are compatible if, for each $\mathbf{x}\in\mathbb{R}^{n}$, there exists a $\mathbf{u}$ that satisfies all inequalities.
\end{Def}
\subsubsection{Compatible Safe Stabilization}
Firstly, we present the universal formula to address the safe stabilization problem considering the compatible scenario.
\begin{Lem}\label{Compatible_Lemma}
(\cite{universal_formula}) Assume that both $\mathbf{b}(\mathbf{x})$ and $\mathbf{d}(\mathbf{x})$ are nonzero vectors. The CLF $V(\mathbf{x})$ and ECBF $h(\mathbf{x})$ for the system~\eqref{Affine_Control_System} are compatible if and only if one of the following conditions is satisfied:
\begin{equation}\label{Compatibility_conditions}
\begin{split}
\begin{cases}
     &\frac{\|\mathbf{b}(\mathbf{x})\mathbf{d}(\mathbf{x})^{\top}\|}{\|\mathbf{b}(\mathbf{x})\|\|\mathbf{d}(\mathbf{x})\|}\neq 1,\\
     &\frac{\|\mathbf{b}(\mathbf{x})\mathbf{d}(\mathbf{x})^{\top}\|}{\|\mathbf{b}(\mathbf{x})\|\|\mathbf{d}(\mathbf{x})\|}= 1\,\,\,  \text{and}\,\,\, v(\mathbf{x})\geq 0, \\
     &\frac{\|\mathbf{b}(\mathbf{x})\mathbf{d}(\mathbf{x})^{\top}\|}{\|\mathbf{b}(\mathbf{x})\|\|\mathbf{d}(\mathbf{x})\|}= 1\,\,\, \text{and}\,\,\, w(\mathbf{x})\geq 0,
     \end{cases}
\end{split}
\end{equation}
where $w(\mathbf{x})=a(\mathbf{x})\mathbf{d}(\mathbf{x})^{\top}\mathbf{d}(\mathbf{x})-\mathbf{c}(\mathbf{x}) \mathbf{b}(\mathbf{x})^{\top} \mathbf{d}(\mathbf{x})$ and $ v(\mathbf{x})=a(\mathbf{x})\mathbf{d}(\mathbf{x})^{\top} \mathbf{b}(\mathbf{x})-\mathbf{c}(\mathbf{x})\mathbf{b}(\mathbf{x})^{\top} \mathbf{b}(\mathbf{x})$.
\end{Lem}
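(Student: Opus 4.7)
The plan is to recast the compatibility question as a feasibility problem for the pair of linear half-space constraints $\mathbf{b}(\mathbf{x})\mathbf{u}\le -a(\mathbf{x})$ and $\mathbf{d}(\mathbf{x})\mathbf{u}\ge -c(\mathbf{x})$ in $\mathbf{u}\in\mathbb{R}^m$, and to resolve it by linear algebra on the pair $(\mathbf{b},\mathbf{d})$ together with a one-dimensional geometric (Farkas-type) argument.

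First, I would note that by the Cauchy--Schwarz inequality $|\mathbf{b}\mathbf{d}^{\top}|\le\|\mathbf{b}\|\,\|\mathbf{d}\|$ with equality iff $\mathbf{b}$ and $\mathbf{d}$ are collinear. Since the lemma assumes $\mathbf{b},\mathbf{d}\neq\mathbf{0}$, the ratio appearing in \eqref{Compatibility_conditions} is strictly less than one iff $\mathbf{b}$ and $\mathbf{d}$ are linearly independent, and equals one iff $\mathbf{b}=\lambda\mathbf{d}$ for some nonzero scalar $\lambda$. This already separates (i) from the collinear regime addressed by (ii) and (iii).

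Second, for the linearly independent regime I would argue sufficiency directly: the linear map $\mathbf{u}\mapsto(\mathbf{b}(\mathbf{x})\mathbf{u},\mathbf{d}(\mathbf{x})\mathbf{u})$ has rank two and is therefore surjective onto $\mathbb{R}^{2}$, so any pair of scalar values can be realized, and in particular $\mathbf{b}\mathbf{u}\le -a$ and $\mathbf{d}\mathbf{u}\ge -c$ can be met simultaneously.

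Third, in the collinear regime I would write $\mathbf{b}=\lambda\mathbf{d}$ and reduce to the scalar unknown $s:=\mathbf{d}(\mathbf{x})\mathbf{u}$. The two inequalities become $\lambda s\le -a$ and $s\ge -c$; for $\lambda>0$ these pin $s$ between two bounds, yielding feasibility iff the scalar condition $a-\lambda c\le 0$ holds, while for $\lambda<0$ both conditions lower-bound $s$ and feasibility is automatic. Substituting $\mathbf{b}=\lambda\mathbf{d}$ into the definitions in the lemma gives the factorizations $v=\lambda\|\mathbf{d}\|^{2}(a-\lambda c)$ and $w=\|\mathbf{d}\|^{2}(a-\lambda c)$, so that $v=\lambda w$. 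This identity lets me translate the 1-D feasibility condition into the sign pattern of $(v,w)$: in the co-directed subcase $v$ and $w$ share a sign and jointly encode the feasibility condition, whereas in the counter-directed subcase they have opposite signs, so (at least) one of the disjuncts in (ii)--(iii) is automatically satisfied, matching automatic feasibility. Combining the two regimes delivers both implications of the ``iff''.

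The step I expect to be delicate is precisely this last one: the sign bookkeeping that turns ``$a-\lambda c\le 0$ or $\lambda<0$'' into the disjunction ``$v\ge 0$ or $w\ge 0$'' of the lemma. Conditions (ii) and (iii) are formally symmetric in $\mathbf{b}$ and $\mathbf{d}$, yet only together do they cover both the parallel and antiparallel subcases, and one also has to check that the equality boundary $v=w=0$ does not open a gap in the necessity direction. The rest reduces to direct substitution and Cauchy--Schwarz, so the proof should be short once this case analysis is set up cleanly.
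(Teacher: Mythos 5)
The paper gives no proof of this lemma---it is imported verbatim from \cite{universal_formula}---so there is nothing to compare your route against; I can only judge the argument on its own terms. Your skeleton is the right one: Cauchy--Schwarz to identify the ratio-equal-to-one regime with collinearity of $\mathbf{b}$ and $\mathbf{d}$, surjectivity of $\mathbf{u}\mapsto(\mathbf{b}\mathbf{u},\mathbf{d}\mathbf{u})$ to dispose of the linearly independent case, and reduction to the scalar $s=\mathbf{d}\mathbf{u}$ in the collinear case. Your factorizations $w=\|\mathbf{d}\|^{2}(a-\lambda c)$ and $v=\lambda w$ under $\mathbf{b}=\lambda\mathbf{d}$ are also correct, as is the observation that for $\lambda<0$ the pair $(v,w)$ has opposite signs so that the disjunction in (ii)--(iii) holds automatically, matching automatic feasibility.

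The genuine gap is exactly the step you defer as ``delicate'': it does not close, and in fact it cannot close for the statement as printed. For $\lambda>0$ your own reduction gives feasibility if and only if $a-\lambda c\le 0$, i.e.\ $w\le 0$ (equivalently $v\le 0$, since $v=\lambda w$ with $\lambda>0$); but in this subcase the lemma's disjunction ``$v\ge 0$ or $w\ge 0$'' collapses to ``$w\ge 0$'', which is the \emph{negation} of your feasibility condition away from the boundary $w=0$. A one-dimensional check makes this concrete: take $m=1$, $\mathbf{b}=\mathbf{d}=1$, $a=1$, $c=0$; then $v=w=1\ge 0$ so conditions (ii)--(iii) hold, yet the constraints $u\le-1$ and $u\ge 0$ are jointly infeasible. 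So either the transcription of the lemma in this paper carries a sign error (most plausibly $v$ and $w$ should be defined with the opposite sign, or the inequalities should read $v\le 0$, $w\le 0$), or your scalar reduction is wrong---and the explicit example shows it is not your reduction. You should either flag the inconsistency and prove the corrected statement, or you have no proof of the statement as written; saying the signs ``jointly encode the feasibility condition'' papers over the fact that they encode it with the wrong orientation.
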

\begin{Thm}\label{Compatible_Qp}
(\cite{universal_formula}) Assume that the CLF $V(\mathbf{x})$ and ECBF $h(\mathbf{x})$ for the system~\eqref{Affine_Control_System} are compatible. The generalized universal formula 
\begin{equation}\label{QP_Control_Law_relaxed}
    \begin{split}
      \mathbf{u}_{\mathrm{Stg}}^{\star}(\mathbf{x})=\left\{\begin{array}{ll}
\mathbf{m}_{\mathrm{Stg-CLF}}^{\star}(\mathbf{x}),&\quad\mathbf{x}\in\mathcal{P}_{1},\\
\mathbf{n}_{\mathrm{Stg-CBF}}^{\star}(\mathbf{x}),&\quad\mathbf{x}\in\mathcal{P}_{2},\\
\mathbf{p}_{\mathrm{Stg}}^{\star}(\mathbf{x}),&\quad\mathbf{x}\in\mathcal{P}_{3},\\
\mathbf{0},&\quad\mathbf{x}\in\mathcal{P}_{4},
\end{array}\right.  
    \end{split}
\end{equation}
ensures both of the safety and stability of~\eqref{Affine_Control_System} simultaneously, where $\mathbf{m}_{\mathrm{Stg-CLF}}^{\star}(\mathbf{x})=-\frac{a(\mathbf{x})+\kappa\sigma(\mathbf{x})}{\mathbf{b}(\mathbf{x})\mathbf{b}(\mathbf{x})^{\top}} \mathbf{b}(\mathbf{x})^{\top}$, $\mathbf{n}_{\mathrm{Stg-CBF}}^{\star}(\mathbf{x})= \frac{\rho\Gamma(\mathbf{x})- c(\mathbf{x})}{\mathbf{d}(\mathbf{x})\mathbf{d}(\mathbf{x})^{\top}} \mathbf{d}(\mathbf{x})^{\top}$,  $\mathbf{p}_{\mathrm{Stg}}^{\star}=\epsilon_{2}\mathbf{m}_{\mathrm{CLF}}^{\star}(\mathbf{x})
+\epsilon_{3}\mathbf{n}_{\mathrm{CBF}}^{\star}(\mathbf{x})$, $\lambda_{2}=-\lambda_{1}\frac{\mathbf{b}(\mathbf{x})\mathbf{b}(\mathbf{x})^{\top}}{a(\mathbf{x})+\kappa\sigma(\mathbf{x})}$, $\epsilon_{3}=\lambda_{2}\frac{\mathbf{d}(\mathbf{x})\mathbf{d}(\mathbf{x})^{\top}}{\rho\Gamma(\mathbf{x})-c(\mathbf{x})}$,
\begin{equation*}\label{bar_lambda_compute}
    \begin{split}
        \begin{bmatrix}
\lambda_{1}\\
\lambda_{2}
\end{bmatrix}={\begin{bmatrix}
\mathbf{b}(\mathbf{x})\mathbf{b}(\mathbf{x})^{\top} & -\mathbf{b}(\mathbf{x})\mathbf{d}(\mathbf{x})^{\top}\\
-\mathbf{d}(\mathbf{x})\mathbf{b}(\mathbf{x})^{\top} & \mathbf{d}(\mathbf{x})\mathbf{d}(\mathbf{x})^{\top}
\end{bmatrix}}^{-1}\begin{bmatrix}
a(\mathbf{x}))+\kappa\sigma(\mathbf{x})\\
\rho\Gamma(\mathbf{x})-c(\mathbf{x})
\end{bmatrix},
    \end{split}
\end{equation*}
\begin{equation*}\label{Domain_of_sets_relaxed}
\begin{aligned}
&\mathcal{P}_{1}=\{\mathbf{x}\in\mathbb{R}^{n}|a(\mathbf{x})+\kappa\sigma(\mathbf{x})\geq 0,v(\mathbf{x})< 0\},\\
&\mathcal{P}_{2}=\{\mathbf{x}\in\mathbb{R}^{n}|c(\mathbf{x})-\rho\Gamma(\mathbf{x})\leq 0,w(\mathbf{x})< 0\},\\
&\mathcal{P}_{3}=\{\mathbf{x}\in\mathbb{R}^{n}|w(\mathbf{x})\geq 0,v(\mathbf{x})\geq 0, \\
&\mathbf{b}(\mathbf{x})^{\top} \mathbf{b}(\mathbf{x}) \mathbf{d}(\mathbf{x})^{\top} \mathbf{d}(\mathbf{x})-\mathbf{b}(\mathbf{x})^{\top} \mathbf{d}(\mathbf{x}) \mathbf{b}(\mathbf{x})^{\top} \mathbf{d}(\mathbf{x}) \neq 0\},\\
&\mathcal{P}_{4}=\{\mathbf{x}\in\mathbb{R}^{n}|a(\mathbf{x})+\kappa\sigma(\mathbf{x})< 0,c(\mathbf{x})-\rho\Gamma(\mathbf{x})> 0\}.
\end{aligned}
\end{equation*}
\end{Thm}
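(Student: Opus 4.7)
My plan is to recognize the expression \eqref{QP_Control_Law_relaxed} as the closed-form KKT solution of the (relaxed) quadratic program
\begin{equation*}
\min_{\mathbf{u}\in\mathbb{R}^{m}} \tfrac{1}{2}\|\mathbf{u}\|^{2}\quad\text{s.t.}\quad a(\mathbf{x})+\kappa\sigma(\mathbf{x})+\mathbf{b}(\mathbf{x})\mathbf{u}\leq 0,\;\; c(\mathbf{x})-\rho\Gamma(\mathbf{x})+\mathbf{d}(\mathbf{x})\mathbf{u}\geq 0,
\end{equation*}
and then check, region by region, that the two resulting inequalities coincide with the CLF condition \eqref{CLF_Condition} and the ECBF condition \eqref{ECBF} (possibly with slack introduced by $\kappa\sigma$ and $\rho\Gamma$). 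The partition $\{\mathcal{P}_{1},\mathcal{P}_{2},\mathcal{P}_{3},\mathcal{P}_{4}\}$ is exactly the standard KKT case split: neither constraint active, only CLF active, only CBF active, or both active. Compatibility, as characterized in Lemma \ref{Compatible_Lemma}, guarantees that the feasible set is nonempty on the full state space, so the QP has a (unique) minimizer in every case.

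I would proceed in the following order. First, treat $\mathcal{P}_{4}$: the defining inequalities $a(\mathbf{x})+\kappa\sigma(\mathbf{x})<0$ and $c(\mathbf{x})-\rho\Gamma(\mathbf{x})>0$ say that $\mathbf{u}=\mathbf{0}$ already strictly satisfies both relaxed constraints, and since $-\kappa\sigma(\mathbf{x})\le 0$ and $\rho\Gamma(\mathbf{x})\ge 0$, the original CLF and ECBF inequalities follow immediately. Next, for $\mathcal{P}_{1}$, I substitute $\mathbf{m}_{\mathrm{Stg\text{-}CLF}}^{\star}$ into the CLF side to obtain $a(\mathbf{x})+\mathbf{b}(\mathbf{x})\mathbf{m}_{\mathrm{Stg\text{-}CLF}}^{\star}(\mathbf{x})=-\kappa\sigma(\mathbf{x})\le 0$, so \eqref{CLF_Condition} is enforced; the CBF inequality needs verification, and this is where $v(\mathbf{x})<0$ enters: plugging $\mathbf{m}_{\mathrm{Stg\text{-}CLF}}^{\star}$ into $c(\mathbf{x})+\mathbf{d}(\mathbf{x})\mathbf{u}$ and clearing the positive denominator $\mathbf{b}\mathbf{b}^{\top}$ yields $-v(\mathbf{x})-\kappa\sigma(\mathbf{x})\,\mathbf{d}(\mathbf{x})\mathbf{b}(\mathbf{x})^{\top}$, which together with the sign information implied by the Cauchy--Schwarz bound and the ordering in \eqref{Compatibility_conditions} must be shown to be nonnegative. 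The treatment of $\mathcal{P}_{2}$ is the mirror image, using $w(\mathbf{x})<0$ to verify the CLF constraint when the CBF formula is applied.

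The main work lies in $\mathcal{P}_{3}$, where both constraints are active and the formula becomes the two-multiplier combination $\mathbf{p}_{\mathrm{Stg}}^{\star}=\epsilon_{2}\mathbf{m}_{\mathrm{Stg\text{-}CLF}}^{\star}+\epsilon_{3}\mathbf{n}_{\mathrm{Stg\text{-}CBF}}^{\star}$ with $(\lambda_{1},\lambda_{2})$ determined by the linear system in the statement. Here the strategy is to substitute $\mathbf{p}_{\mathrm{Stg}}^{\star}$ into both the CLF and the CBF expressions and show, by direct algebraic manipulation using the definitions of $\epsilon_{2},\epsilon_{3}$, that both relaxed inequalities hold with equality. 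Concretely, the two-by-two matrix in the definition of $(\lambda_{1},\lambda_{2})$ is the Gram matrix of $\mathbf{b}^{\top},\mathbf{d}^{\top}$, and the additional condition $\mathbf{b}\mathbf{b}^{\top}\mathbf{d}\mathbf{d}^{\top}-(\mathbf{b}\mathbf{d}^{\top})^{2}\ne 0$ in the defining set for $\mathcal{P}_{3}$ is precisely the invertibility condition (by Cauchy--Schwarz), so the inverse is well-defined. I would then check that $\lambda_{1},\lambda_{2}\ge 0$ under the sign conditions $w(\mathbf{x})\ge 0$ and $v(\mathbf{x})\ge 0$, confirming that the KKT multipliers are consistent and the solution is globally optimal; the CLF and CBF conditions then drop out because the residuals are $-\kappa\sigma$ and $\rho\Gamma$ respectively.

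The hardest part, I expect, will not be any single algebraic identity but rather verifying that the four regions $\mathcal{P}_{1},\ldots,\mathcal{P}_{4}$ cover $\mathbb{R}^{n}$ without contradictory overlaps, and that the piecewise law defined by \eqref{QP_Control_Law_relaxed} is continuous across their shared boundaries (so that the closed-loop vector field remains locally Lipschitz and the invariance/stability arguments carry over). Once coverage and boundary-matching are established, the conclusion follows by standard Lyapunov and set-invariance arguments applied to each piece: $\dot V\le -\kappa\sigma(\mathbf{x})\le 0$ gives asymptotic stability, and $\dot h$ satisfies the ECBF inequality, yielding forward invariance of $\mathcal{C}$.
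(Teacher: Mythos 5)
The paper does not actually prove Theorem~\ref{Compatible_Qp} --- it is imported from~\cite{universal_formula}, and the only in-paper justification is Remark~\ref{Universal_Interpre}, which identifies~\eqref{QP_Control_Law_relaxed} as the closed-form solution of the QP~\eqref{QP_formulation}. Your KKT case-analysis plan is exactly that identification made explicit (the four regions $\mathcal{P}_1,\dots,\mathcal{P}_4$ as the four active-set configurations), so it follows essentially the same route as the paper's cited source; just be aware that the steps you defer --- verifying the inactive constraint via $v(\mathbf{x})<0$ or $w(\mathbf{x})<0$ on $\mathcal{P}_1,\mathcal{P}_2$, and the nonnegativity of $\lambda_1,\lambda_2$ on $\mathcal{P}_3$ --- are where all the actual work lies, and the stability conclusion additionally needs the local Lipschitz continuity across region boundaries that you correctly flag.
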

\begin{Rmk}\label{Universal_Interpre}
    It should be noticed that~\eqref{QP_Control_Law_relaxed} is the solution of the following optimization problem.
    \begin{equation}\label{QP_formulation}
        \begin{split}
            \min\limits_{\mathbf{u}\in\mathbb{R}^{m}}&\frac{1}{2}\|\mathbf{u}\|^{2}\\
            &a(\mathbf{x})+\mathbf{b}(\mathbf{x}) \mathbf{u}\leq -\kappa\sigma(\mathbf{x})\\
            &c(\mathbf{x})+\mathbf{d}(\mathbf{x}) \mathbf{u}\geq \rho\Gamma(\mathbf{x}).
        \end{split}
    \end{equation}
In fact, $\sigma(\mathbf{x})$ and $\Gamma(\mathbf{x})$ can be designed accordingly with certain requirements as long as they are positive semi-definite. Consequently, the universal formula can be seen as a special solution of~\eqref{QP_formulation} with $\sigma(\mathbf{x})=\sqrt{a^{2}(\mathbf{x})+\phi(\mathbf{x})\|\mathbf{b}(\mathbf{x}) \|^{4}}$ and $\Gamma(\mathbf{x})=\sqrt{c^{2}(\mathbf{x})+\varphi(\mathbf{x})\|\mathbf{d}(\mathbf{x}) \|^{4}}$. Additionally, we remark that the universal formula given in~\eqref{QP_Control_Law_relaxed} can be rewritten in a more compact form as a combination of $\mathbf{m}_{\mathrm{Uni-CLF}}^{\star}(\mathbf{x})$ and $\mathbf{n}_{\mathrm{Uni-CLF}}^{\star}(\mathbf{x})$, i.e., $\mathbf{u}_{\mathrm{Uni}}^{\star}(\mathbf{x})=\epsilon_{1}^{*}\mathbf{m}_{\mathrm{Uni-CLF}}^{\star}(\mathbf{x})+\epsilon_{2}^{*}\mathbf{n}_{\mathrm{Uni-CLF}}^{\star}(\mathbf{x})$, where 
\begin{equation*}\label{Parameters}
    \begin{split}
     \left\{\begin{array}{ll}
      \epsilon_{1}^{*}=1,\,\, \epsilon_{2}^{*}=0,&\quad\mathbf{x}\in\mathcal{P}_{1},\\
\epsilon_{1}^{*}=0, \,\,\epsilon_{2}^{*}=1,&\quad\mathbf{x}\in\mathcal{P}_{2},\\
\epsilon_{1}^{*}=\epsilon_{1}, \epsilon_{2}^{*}=\epsilon_{2},&\quad\mathbf{x}\in\mathcal{P}_{3},\\
\epsilon_{1}^{*}=0, \,\,\epsilon_{2}^{*}=0,&\quad\mathbf{x}\in\mathcal{P}_{4}.
\end{array}\right.  
    \end{split}
\end{equation*}
To provide a combination of $\mathbf{m}_{\mathrm{Uni-CLF}}^{\star}(\mathbf{x})$ and $\mathbf{n}_{\mathrm{Uni-CLF}}^{\star}(\mathbf{x})$ that yields a universal formula, an alternative method is proposed in~\cite{Smooth_universal}. The universal formula given in~\cite{Smooth_universal} involves employing a smooth function to define and determine the parameters $\epsilon_{1}^{*}$ and $\epsilon_{2}^{*}$. Generally, different universal formulas can be constructed by carefully selecting the parameters $\epsilon_{1}^{*}$ and $\epsilon_{2}^{*}$, which will result in different state and control behaviours.
\end{Rmk}
\subsubsection{Imcompatible Safe Stabilization}
In practice, the compatibility assumption, i.e., the condition \eqref{Compatibility_conditions},  may not always hold true for certain CLFs and CBFs. In such cases, we need to consider relaxation strategies. Unlike the method described in~\cite{universal_formula} which introduces a slack variable, we choose the following universal formula. 
\begin{equation}\label{QP_Sequential}
    \begin{split}
      \tilde{\mathbf{u}}_{\mathrm{Stg}}^{\star}(\mathbf{x})=\left\{\begin{array}{ll}
\mathbf{m}_{\mathrm{Stg-CLF}}^{\star}(\mathbf{x}),&\quad\mathbf{x}\in\mathcal{S}_{1},\\
\tilde{\mathbf{p}}_{\mathrm{Stg}}^{\star}(\mathbf{x}),&\quad\mathbf{x}\notin\mathcal{S}_{1},
\end{array}\right.  
    \end{split}
\end{equation}
where $\tilde{\mathbf{p}}_{\mathrm{Stg}}^{\star}(\mathbf{x})=\mathbf{m}_{\mathrm{Stg-CLF}}^{\star}(\mathbf{x})-\frac{\Delta(\mathbf{x})}{\mathbf{d}(\mathbf{x})\mathbf{d}(\mathbf{x})^{\top}}\mathbf{d}(\mathbf{x})$, $\Delta(\mathbf{x})=c(\mathbf{x})+\mathbf{d}(\mathbf{x})\mathbf{m}_{\mathrm{Stg-CLF}}^{\star}(\mathbf{x})$, and $\mathcal{S}_{1}=\{\mathbf{x}|\Delta(\mathbf{x})\geq 0\}$.
\begin{Rmk}\label{Imcomp_Interp}
    The universal formula given by~\eqref{QP_Sequential} is the solution of
     \begin{equation}\label{QP_sequential_formu}
        \begin{split}
            \min\limits_{\mathbf{u}\in\mathbb{R}^{m}}&\frac{1}{2}\|\mathbf{u}-\mathbf{m}_{\mathrm{Stg-CLF}}(\mathbf{x})\|^{2}\\
            &c(\mathbf{x})+\mathbf{d}(\mathbf{x}) \mathbf{u}\geq \rho\Gamma(\mathbf{x}).
        \end{split}
    \end{equation}
For the optimization~\eqref{QP_sequential_formu}, we begin by developing a universal stabilization formula, i.e., $\mathbf{m}_{\mathrm{Stg-CLF}}(\mathbf{x})$. When the system violates the safety constraints ($\mathbf{x}\notin\mathcal{S}_{1}$), our objective is to modify the control law $\mathbf{m}_{\mathrm{Stg-CLF}}(\mathbf{x})$ accordingly to ensure safety guarantees. Once the safety constraints are met, i.e., $\mathbf{x}\in\mathcal{S}_{1}$, the control input guarantees the closed-loop stability and safety simultaneously. However, due to that the control input $\mathbf{m}_{\mathrm{Stg-CLF}}(\mathbf{x})$ will be modified if the safety constraints are not met, there is no guarantee for stability.
\end{Rmk}
\section{Universal Formula for Quadrotors Stabilization with Safety Guarantees}\label{Main_results}
In this section, we employ the universal formula presented in~\eqref{QP_Control_Law_relaxed} and~\eqref{QP_Sequential} to address the challenge of safely stabilizing quadrotors in both compatible and incompatible scenarios. Under the position-attitude controller framework, we introduce a universal formula for the position controller to generate setpoint commands for an attitude controller.
\subsection{Quadrotor Dynamics}
For a quadrotor system, two right-handed coordinate frames are used as shown in Fig.~\ref{Coordinates}: they are inertial-frame $\mathcal{F}_{\mathcal{W}}:\{x_{\mathcal{W}},y_{\mathcal{W}},z_{\mathcal{W}}\}$ with $z_{\mathcal{W}}$ pointing upward opposite to the gravity, and the body frame $\mathcal{F}_{\mathcal{B}}:\{x_{\mathcal{B}},y_{\mathcal{B}},z_{\mathcal{B}}\}$ with $x_{\mathcal{B}}$ pointing forward and $z_{\mathcal{B}}$ aligned with the collective thrust direction. The motion of the quadrotor is described by the following equations~\cite{Quadrotor_model}: 
\begin{equation}\label{Quadrotor_model}
\begin{split}
\dot{\mathbf{p}} &=\mathbf{v},\qquad\qquad\quad\,\,\,\, m \dot{\mathbf{v}} =-m g \mathbf{e}_{3}+f \mathbf{R} \mathbf{e}_{3}, \\
\dot{\mathbf{R}}&=\mathbf{R}\bm{\Omega}^{\times},\qquad\qquad\,\,
\mathbf{J}\dot{\bm{\Omega}}=-\bm{\Omega}\times\mathbf{J}\bm{\Omega}+\bm{\tau},
\end{split}
\end{equation}
where the position and velocity of the quadrotor are denoted by $\mathbf{p}=[x,y,z]^{\top}\in\mathbb{R}^{3}$ and $\mathbf{v}=[v_x,v_y,v_z]^{\top}\in\mathbb{R}^{3}$, respectively, and $\mathbf{e}_{3}$ is a unit vector $\mathbf{e}_{3} = [0, 0, 1]^{\top}$. The rotation matrix $\mathbf{R}$ (from $\mathcal{F}_{\mathcal{B}}$ to $\mathcal{F}_{\mathcal{W}}$) for a quadrotor is obtained by three successive rotations: first a rotation of an angle $\psi$ around the $z$ axis, then a rotation of an angle $\theta$ around $y$ axis and finally a rotation an angle $\phi$ around $x$ axis. The resulting rotation matrix is defined as:
\begin{equation*}
\mathbf{R}=\left(\begin{array}{ccc}
c_\phi c_\psi & s_\psi c_\phi & -s_\theta \\
c_\psi s_\theta s_\phi-s_\psi c_\phi & s_\psi s_\theta s_\phi+c_\phi c_\psi & c_\theta s_\phi \\
c_\psi s_\theta c_\phi+s_\phi s_\psi & s_\psi s_\theta c_\phi-c_\psi s_\phi & c_\phi c_\theta
\end{array}\right)
\end{equation*}
where $c$ and $s$ are shorthand forms for cosine and sine, respectively.  The angular variable is defined as $\bm{\zeta}=[\phi, \theta,\psi]^{\top}$, and the angular velocity is denoted by $\bm{\Omega}=[p,q,r]^{\top}$, and $\bm{\Omega}^{\times}$ is a skew-symmetric matrix. For any vector $\mathbf{s}\in\mathbb{R}^{3}$, $\bm{\Omega}^{\times}\mathbf{s}=\bm{\Omega}\times\mathbf{s}$, where $\times$ denotes the vector cross product.  We thereby define the state $\mathbf{x}: =[\mathbf{p}^{\top},\mathbf{v}^{\top},\bm{\zeta}^{\top},\bm{\Omega}^{\top}]^{\top}\in\mathbb{R}^{12}$ and control input $\mathbf{u}: =[f,\bm{\tau}]^{\top}\in\mathbb{R}^{4}$. The mass of the quadrotor is denoted by $m$. The unit vector $\mathbf{e}_{3}=[0,0,1]^{\top}$ specifies the direction of the gravitational force $g$ in the inertial frame $\mathcal{F}_{\mathcal{W}}$. The quadrotor's total force, represented by $f\in\mathbb{R}^{+}$, is the combined effect of the individual thrust forces produced by its rotors. The total torque is denoted as $\bm{\tau}=[\tau_{x},\tau_{y},\tau_{z}]^{\top}$. The quadrotor's inertia matrix, represented by $\mathbf{J}=\mathrm{diag}\left([\mathrm{J}_{x}, \mathrm{J}_{y}, \mathrm{J}_{z}]\right)$, describes how the mass is distributed throughout the quadrotor and determines how it responds to external forces and torques.

\begin{figure}[tp]
 \centering
    \makebox[0pt]{%
    \includegraphics[width=3.2in]{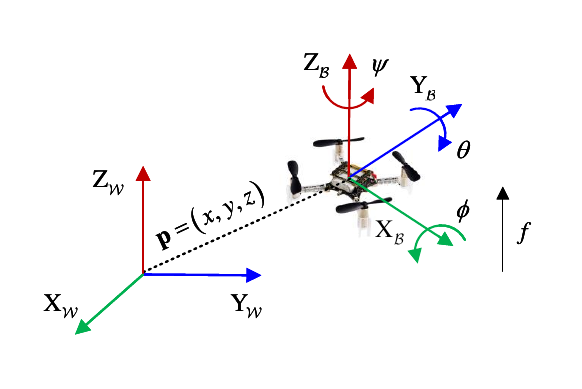}}
    \caption{Quadrotor coordinate frames: System conversion with $\mathcal{W}$ and $\mathcal{B}$ denoting world and body frames, respectively.}
    \label{Coordinates}
\end{figure}
\subsection{An Approximation of Multiple Barrier Functions}\label{Approximation_technique}
The main reason for employing the approximation strategy is that the universal formulas ~\eqref{QP_Control_Law_relaxed} and~\eqref{QP_Sequential} are designed to handle only one single safety constraint. 
When dealing with multiple barrier functions, one possible solution is to first construct a universal control law with a single CBF constraint and then set it as the nominal control input. Next, by repeatedly employing~\eqref{QP_Sequential}, a universal control law for multiple barrier functions can be ultimately obtained. However, since the nominal control input is modified at each time we apply the universal formula, the safety is no longer ensured. To this end, we employ the following approximation strategy, which offers a sufficient condition for multiple barrier function satisfactions. This approximation often leads to a conservative but strict safety guarantees.

Firstly, we define
\begin{equation}\label{CBFs_Approx}
    h(\mathbf{x}):=-\frac{1}{\eta}\mathrm{ln}\left(\sum\nolimits_{l=1}^{p}\exp\left(-\eta h_{l}(\mathbf{x})\right)\right),
\end{equation}
where $l\in\{1,\cdots,p\}$, $\eta>0$. Given that 
\begin{equation*}
    \lim_{\eta\rightarrow\infty}-\frac{1}{\eta}\mathrm{ln}\left(\sum\nolimits_{l=1}^{p}\exp\left(-\eta h_{l}(\mathbf{x})\right)\right)=\min_{l\in\{1,\cdots,p\}}(h_{l}(\mathbf{x})),
\end{equation*}
we can infer that $h(\mathbf{x})=-\frac{1}{\eta}\mathrm{ln}\left(\sum\nolimits_{l=1}^{p}\exp\left(-\eta h_{l}(\mathbf{x})\right)\right)\leq\min_{l\in\{1,\cdots,p\}}(h_{l}(\mathbf{x}))$. Therefore,  $h(\mathbf{x})>0$ is a sufficient condition to ensure $h_{l}(\mathbf{x})>0, l=1,\cdots,p$, which means that safety constraints are satisfied as a consequence. 

In Fig.~\ref{Approx_CBFs}, we present a demonstration of the approximation's performance. Within this visual representation, we establish three barrier functions as follows: $h_{l}(\mathbf{x})=(x-o_{x,l})^{2}+(y-o_{y,l})^{2}-r_{o,l}^{2}, l=1,2,3$, where the centers and radius of the obstacles are $[o_{x,l},o_{y,l},r_{o,l}]=[0.6775,-1.8844,0.3]$, $[o_{x,l},o_{y,l},r_{o,l}]=[0.28182,1.9802,0.3]$, and $[o_{x,l},o_{y,l},r_{o,l}]=[-1.7593,-0.95163,0.3]$. As we see, as the parameter $\eta$ increases, the safe regions originally defined by the three barrier functions can be effectively approximated by one single barrier function.

For the convenience of later discussions, we compute the time derivatives of $h(\mathbf{x})$ as follows:
\begin{equation}\label{First_Time_Derivative}
    \dot{h}(\mathbf{x})=\frac{\sum\nolimits_{l=1}^{p}\left[\exp\left(-\eta h_{l}(\mathbf{x})\right)\cdot\dot{h}_{l}(\mathbf{x})\right]}{\sum\nolimits_{l=1}^{p}\exp\left(-\eta h_{l}(\mathbf{x})\right)}.
\end{equation}
In the following, we define $\xi(\mathbf{x}):=\sum\nolimits_{l=1}^{p}\exp\left(-\eta h_{l}(\mathbf{x})\right)$ and $\zeta(\mathbf{x}):=\sum\nolimits_{l=1}^{p}\left[\exp\left(-\eta h_{l}(\mathbf{x})\right)\cdot\dot{h}_{l}(\mathbf{x})\right]$. Then we know that
\begin{equation}
    \ddot{h}(\mathbf{x})=\frac{\xi(\mathbf{x})\dot{\zeta}(\mathbf{x})-\dot{\xi}(\mathbf{x})\zeta(\mathbf{x})}{\xi^{2}(\mathbf{x})}.
\end{equation}
We will find that 
\begin{equation}\label{Time_Derivative_CBFs_general}
\begin{split}
    \dot{\zeta}(\mathbf{x})&=\chi(\mathbf{x})+\upsilon(\mathbf{x}),\\
    \dot{\xi}(\mathbf{x})&=-\eta\sum\nolimits_{l=1}^{p}\left[\exp\left(-\eta h_{l}(\mathbf{x})\right)\cdot\dot{h}_{l}(\mathbf{x})\right]=-\eta\zeta.
\end{split}
\end{equation}
where $\chi(\mathbf{x})=-\eta\sum\nolimits_{l=1}^{p}\left[\exp\left(-\eta h_{l}(\mathbf{x})\right)\cdot\dot{h}_{l}^{2}(\mathbf{x})\right]$ and $\upsilon(\mathbf{x})=\sum\nolimits_{l=1}^{p}\left[\exp\left(-\eta h_{l}(\mathbf{x})\right)\cdot\ddot{h}_{l}(\mathbf{x})\right]$.
\begin{figure}[tp]
 \centering
    \makebox[0pt]{%
    \includegraphics[width=3in]{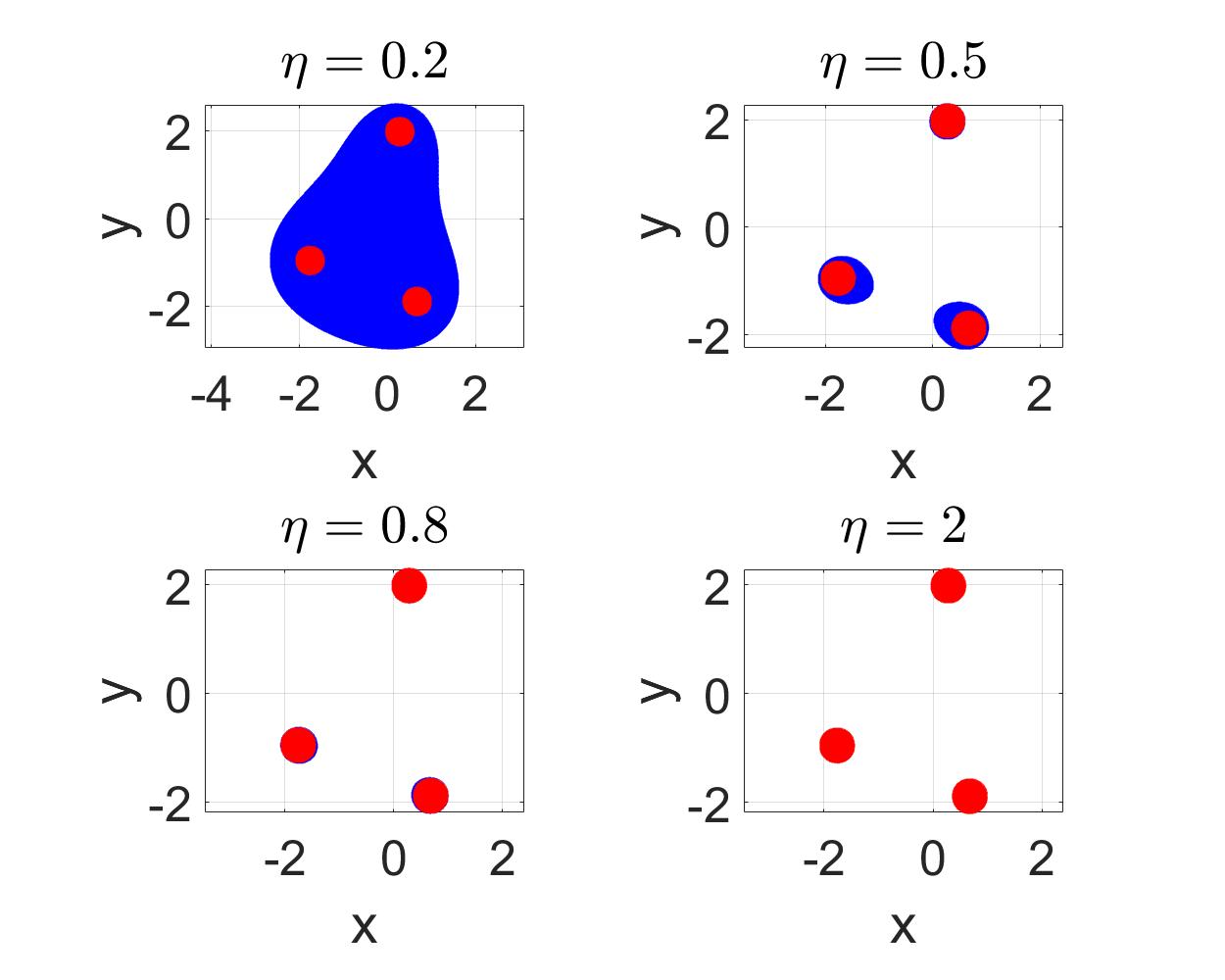}}
    \caption{An approximation of multiple barrier functions: The areas shaded in red represent the unsafe regions defined by the three barrier functions, while the blue regions correspond to the approximated barrier function.}
    \label{Approx_CBFs}
\end{figure}
\subsection{Position-Attitude Controller Architecture}\label{Structure}
The proposed position-attitude controller architecture ~\cite{Geo_control_1,Geo_control_2} involves two controllers: the \textit{position controller} and the \textit{attitude controller}. Fig.~\ref{Control_schematic} shows the control scheme of the quadrotor. Our objective is to generate setpoint commands with stability and safety guarantees within the position controller using the universal formula. Afterward, a PID controller is implemented in the attitude controller to track the trajectory defined by the setpoints.
\subsubsection{Position Controller}
As dipicted in Fig.~\ref{Control_schematic}, the position controller can be divided into i) an altitude controller whose output is the thrust $f$ required to maintain a certain position in $z$; and ii) a lateral $\mathrm{X-Y}$ position controller whose outputs are the required roll and pitch angles, i.e., $\phi_{d}$ and $\theta_{d}$, that will be regulated by the attitude controller. 
\begin{figure*}[tp]
 \centering
    \makebox[0pt]{%
    \includegraphics[width=7in]{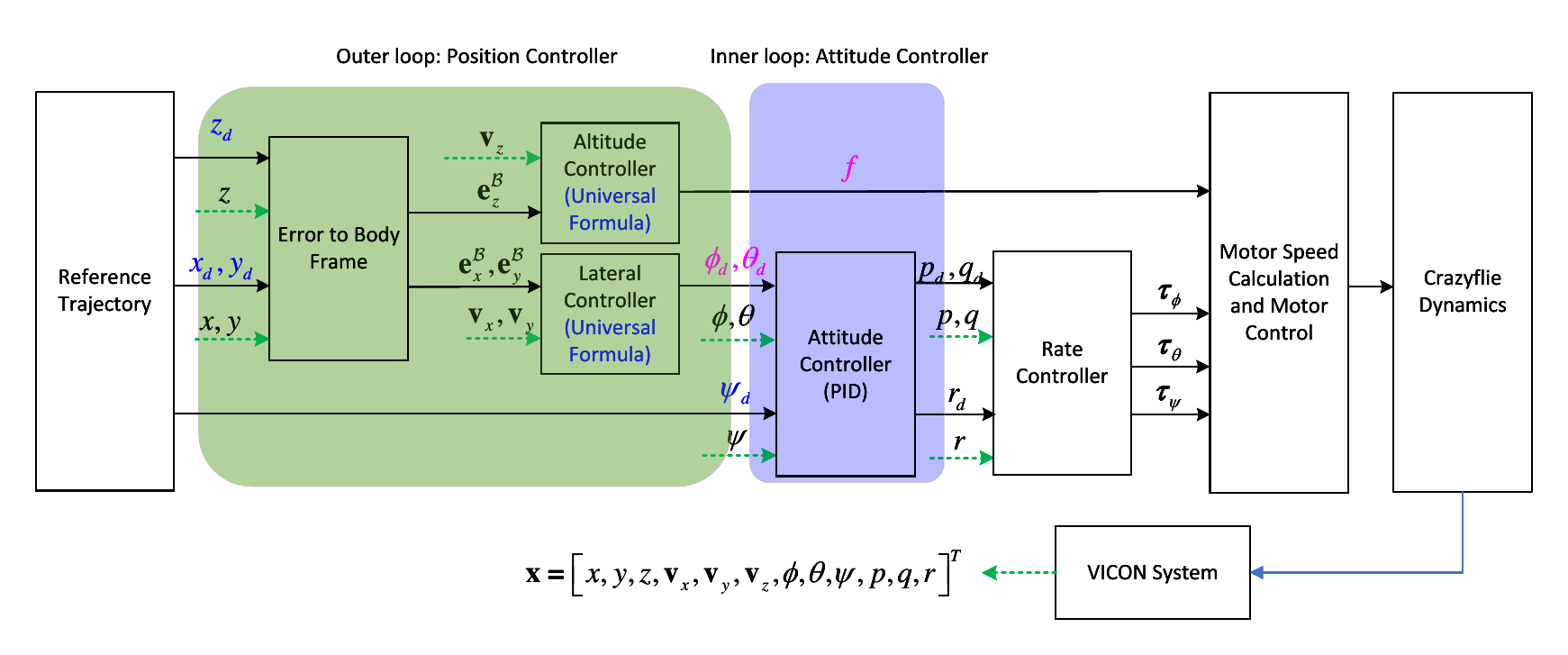}}
    \caption{Control schematic of the quadrotor.}
    \label{Control_schematic}
\end{figure*}

\noindent
$\bullet$ \textbf{Altitude Controller}

We define the altitude errors as follows:
\begin{equation}
    \mathbf{e}_{z}=z-z_{d},\quad \mathbf{e}_{\mathrm{aux},\mathbf{v}_{z}}=\mathbf{v}_{z}-\dot{z}_{d}+ \iota_{1}\mathbf{e}_{z},
\end{equation}
where $\iota_{1}>0$, and the error $\mathbf{e}_{\mathrm{aux},\mathbf{v}_{z}}$ converges to zero when $\mathbf{e}_{z}$ is zero. Next, we construct the following Lyapunov candidate to stabilize the altitude control system.
\begin{equation}\label{Altitude_Lypaunov}
V_{\mathrm{a}}=\frac{1}{2}\mathbf{e}_{\mathrm{a}}^{\top}\mathbf{e}_{\mathrm{a}},
\end{equation} 
where $\mathbf{e}_{\mathrm{a}}=[\mathbf{e}_{z},\mathbf{e}_{\mathrm{aux},\mathbf{v}_{z}}]^{\top}$. Then the time derivative of the Lyapunov candidate is
\begin{equation}
\begin{split}
\dot{V}_{\mathrm{a}}=\mathbf{e}_{z}^{\top}\dot{\mathbf{e}}_{z}+\mathbf{e}_{\mathrm{aux},\mathbf{v}_{z}}^{\top}(\dot{\mathbf{v}}_{z}-\ddot{z}_{d}+\iota_{1}\dot{\mathbf{e}}_{z}).
\end{split}
\end{equation}
According to~\eqref{Quadrotor_model}, we know that $\dot{\mathbf{v}}_{z}=-f/m+g$. Therefore, for the CLF condition, we have 
\begin{equation}
    \begin{split}
        a_{z}(\mathbf{x})&=\mathbf{e}_{z}^{\top}\dot{\mathbf{e}}_{z}+\mathbf{e}_{\mathrm{aux},\mathbf{v}_{z}}^{\top}(g-\ddot{z}_{d}+\dot{\mathbf{v}}_{z}+\iota_{1}\dot{\mathbf{e}}_{z})+\lambda V_{\mathrm{a}},\\
        \mathbf{b}_{z}(\mathbf{x})&=-\mathbf{e}_{\mathrm{aux},\mathbf{v}_{z}}^{\top}/m.
    \end{split}
\end{equation}
We now examine the safety objective related to altitude. To enforce constraints on the altitude state $z$, we utilize the following safety barrier region.
\begin{equation}\label{CBFs}
\begin{split}
    h_{\mathrm{a},1}(z)=z-z_{\min}, \quad h_{\mathrm{a},2}(z)=z_{\max}-z,
\end{split}    
\end{equation}
where $z_{\min}$ and $z_{\max}$ are constants that denote the minimal and maximal altitude values, respectively. Subsequently, we apply~\eqref{CBFs_Approx} to approximate the two barrier functions in~\eqref{CBFs}, which gives $h_{\mathrm{a}}(z):=-\frac{1}{\eta}\mathrm{ln}\left(\sum\nolimits_{l=1}^{2}\exp\left(-\eta h_{\mathrm{a},l}(z)\right)\right)$. To obtain $c_{z}(\mathbf{x})$ and $\mathbf{d}_{z}(\mathbf{x})$ given in~\eqref{ECBF}, we can use~\eqref{First_Time_Derivative}-\eqref{Time_Derivative_CBFs_general} to compute the corresponding parameters. But note that
\begin{equation}
    \upsilon_{z}(\mathbf{x})=s_{z}(\mathbf{x})+t_{z}(\mathbf{x})\mathbf{u}_{z},
\end{equation}
where $s_{z}(\mathbf{x})=(\exp(-\eta h_{\mathrm{a},1}(\mathbf{x}))-\exp(-\eta h_{\mathrm{a},2}(\mathbf{x})))/g$ and $t_{z}(\mathbf{x})=-1/m\cdot(\exp(-\eta h_{\mathrm{a},1}(\mathbf{x}))-\exp(-\eta h_{\mathrm{a},2}(\mathbf{x})))$ according to~\eqref{Time_Derivative_CBFs_general} and~\eqref{CBFs}. Then we have
\begin{small}
\begin{equation}\label{CBF_parameters}
\begin{split}
c_{z}(\mathbf{x})&=\frac{\xi_{z}(\mathbf{x})(\chi_{z}(\mathbf{x})+s_{z}(\mathbf{x}))-\dot{\xi}_{z}(\mathbf{x})\zeta_{z}(\mathbf{x})}{\xi^{2}(\mathbf{x})}+\mathbf{K}_{z}^{\top}\mathcal{H}_{z}, \\
\mathbf{d}_{z}(\mathbf{x})&=\frac{t_{z}(\mathbf{x})}{\xi_{z}(\mathbf{x})}.
\end{split}
\end{equation}
\end{small}

\begin{Lem}\label{Safe_stabilization}
    Assume that i) the parameters $a_{z}(\mathbf{x})$, $\mathbf{b}_{z}(\mathbf{x})$, $c_{z}(\mathbf{x})$, and $\mathbf{d}_{z}(\mathbf{x})$ can be successfully obtained, and ii) the CLF defined in~\eqref{Altitude_Lypaunov}, is compatible with the approximated CLF $h_{\mathrm{a}}(z)$ for system~\eqref{Quadrotor_model}. With the use of the universal formula ~\eqref{QP_Control_Law_relaxed}, a safe stabilization with respect to altitude can be achieved. 
\end{Lem}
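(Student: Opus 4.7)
The plan is to reduce the altitude stabilization-with-safety task to a direct application of Theorem~\ref{Compatible_Qp}, by recasting the altitude dynamics as a scalar control-affine system and then verifying that the constructed parameters $a_{z}(\mathbf{x}),\mathbf{b}_{z}(\mathbf{x}),c_{z}(\mathbf{x}),\mathbf{d}_{z}(\mathbf{x})$ indeed certify the CLF condition~\eqref{CLF_Condition} and the ECBF condition~\eqref{ECBF} for this subsystem.

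First, I would isolate the altitude dynamics from~\eqref{Quadrotor_model}: since $\ddot{z}=-g+(f/m)(\cos\phi\cos\theta)$ couples into the full quadrotor model, I would treat the vertical component of $\mathbf{R}\mathbf{e}_{3}$ as part of the effective input (absorbed into $f$ under near-hover or decoupling assumption inherent to the position-attitude cascade of Section~\ref{Structure}), so that the altitude subsystem reduces to a control-affine system in $(z,v_{z})$ with scalar input $\mathbf{u}_{z}=f$. Next, I would differentiate $V_{\mathrm{a}}$ in~\eqref{Altitude_Lypaunov} along this subsystem to show that $\dot{V}_{\mathrm{a}}+\lambda V_{\mathrm{a}} = a_{z}(\mathbf{x})+\mathbf{b}_{z}(\mathbf{x})\mathbf{u}_{z}$ with the $a_{z},\mathbf{b}_{z}$ stated above, thereby certifying that $V_{\mathrm{a}}$ is a CLF in the sense of Definition~1 whenever $\mathbf{b}_{z}(\mathbf{x})\neq\mathbf{0}$.

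Second, I would turn to the safety side. The approximation argument of Section~\ref{Approximation_technique} gives $h_{\mathrm{a}}(z)\leq\min\{h_{\mathrm{a},1}(z),h_{\mathrm{a},2}(z)\}$, so forward invariance of $\{h_{\mathrm{a}}(z)\geq 0\}$ implies forward invariance of $\{z_{\min}\leq z\leq z_{\max}\}$. Since $h_{\mathrm{a}}$ has relative degree two with respect to $\mathbf{u}_{z}$, I would verify that the two-fold Lie derivative computation based on~\eqref{First_Time_Derivative}--\eqref{Time_Derivative_CBFs_general} produces exactly the decomposition $\ddot{h}_{\mathrm{a}}=c_{z}(\mathbf{x})-\mathbf{K}_{z}^{\top}\mathcal{H}_{z}+\mathbf{d}_{z}(\mathbf{x})\mathbf{u}_{z}$ with the expressions in~\eqref{CBF_parameters}, so that the ECBF inequality~\eqref{ECBF} reads $c_{z}(\mathbf{x})+\mathbf{d}_{z}(\mathbf{x})\mathbf{u}_{z}\geq 0$. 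With $\mathbf{K}_{z}$ chosen so that the characteristic polynomial of the ECBF is Hurwitz, the result of~\cite{ECBF_Quadrotor} then guarantees forward invariance of $\{h_{\mathrm{a}}\geq 0\}$ whenever the ECBF condition is enforced along trajectories.

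Third, under assumption (ii), the compatibility condition~\eqref{Compatibility_conditions} holds for the pair $(a_{z},\mathbf{b}_{z},c_{z},\mathbf{d}_{z})$, so Theorem~\ref{Compatible_Qp} applies verbatim to the altitude subsystem: the closed-form controller~\eqref{QP_Control_Law_relaxed} is locally Lipschitz off the equilibrium, enforces $a_{z}+\mathbf{b}_{z}\mathbf{u}_{z}\leq -\kappa\sigma(\mathbf{x})\leq 0$ and $c_{z}+\mathbf{d}_{z}\mathbf{u}_{z}\geq \rho\Gamma(\mathbf{x})\geq 0$ pointwise on each region $\mathcal{P}_{i}$, and therefore simultaneously stabilizes $z\to z_{d}$ and keeps $z\in[z_{\min},z_{\max}]$. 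The main obstacle I anticipate is the decoupling step: the thrust $f$ enters $\ddot{z}$ multiplicatively through $\cos\phi\cos\theta$, and making assumption (i) precise requires either an explicit near-hover linearization or treating the attitude loop as a fast inner loop so that the altitude channel is genuinely affine in $f$; once that reduction is justified, the remainder is a bookkeeping application of Theorem~\ref{Compatible_Qp}.
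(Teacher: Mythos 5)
Your proposal is correct and follows essentially the same route as the paper's own proof: invoke Theorem~\ref{Compatible_Qp} under the compatibility assumption to enforce the CLF inequality $a_{z}+\mathbf{b}_{z}f\leq 0$ and the ECBF inequality $c_{z}+\mathbf{d}_{z}f\geq 0$ simultaneously, conclude forward invariance of $\{h_{\mathrm{a}}\geq 0\}$ via Definition~\ref{ECBF_Conditon}, and then use $h_{\mathrm{a}}(z)\leq\min\left(h_{\mathrm{a},1}(z),h_{\mathrm{a},2}(z)\right)$ to recover the original altitude bounds. You are in fact more explicit than the paper on the one genuinely delicate point: the paper simply asserts $\dot{\mathbf{v}}_{z}=-f/m+g$, silently discarding the $\cos\phi\cos\theta$ factor through which $f$ enters $\ddot{z}$, whereas you correctly flag that the affine-in-$f$ reduction rests on a near-hover or fast-inner-loop assumption of the position--attitude cascade.
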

\begin{proof}
   According to Theorem~\ref{Compatible_Qp}, with the parameters $a_{z}(\mathbf{x})$, $\mathbf{b}_{z}(\mathbf{x})$, $c_{z}(\mathbf{x})$, and $\mathbf{d}_{z}(\mathbf{x}),$ the universal fomula control law~\eqref{QP_Control_Law_relaxed} ensures that $\dot{V}_{\mathrm{a}}=a_{z}(\mathbf{x})+\mathbf{b}_{z}(\mathbf{x})f\leq 0$ and $\dot{h}_{\mathrm{a}}=c_{z}(\mathbf{x})+\mathbf{d}_{z}(\mathbf{x})f\geq 0.$ This is in accordance with Definition~\ref{ECBF_Conditon}, which implies that $h_{\mathrm{a}}(z)\geq 0$. Given that $\min\left(h_{\mathrm{a},1}(z),h_{\mathrm{a},2}(z)\right)\geq h_{\mathrm{a}}(z)\geq 0$, both safety and stability requirements are met. As a result, the safe altitude stabilization is accomplished.
\end{proof}

Note that the assumption in Lemma~\ref{Safe_stabilization} on the compatibility of CLF and CBF for system~\eqref{Quadrotor_model} is not always satisfied, as discussed in~\ref{G_Universal}. To address this problem, we can use the universal formula provided in~\eqref{QP_Sequential}. In this case, we can ensure the safety constraint is rigorously met, but without any guarantees of stability. The proof can be conducted by referring to Remark~\ref{Imcomp_Interp} and subsequently adjusting the conditions for the CLF and CBF in the proof of Lemma~\ref{Safe_stabilization} accordingly.

\noindent
$\bullet$ \textbf{Lateral $\text{X-Y}$ Controller}

Let us define errors associated with the dynamics of the quadrotor as follows:
\begin{equation}\label{X_Y_error_dynamics}
\begin{split}
    &\mathbf{e}_{x}=x-x_{d},\quad \mathbf{e}_{y}=y-y_{d},\\
&\mathbf{e}_{\mathrm{aux},\mathbf{v}_{x}}=\mathbf{v}_{x}-\dot{x}_{d}+\iota_{2}\mathbf{e}_{x},\mathbf{e}_{\mathrm{aux},\mathbf{v}_{y}}=\mathbf{v}_{y}-\dot{y}_{d}+\iota_{3}\mathbf{e}_{y},
\end{split}
\end{equation}
where $\iota_{2}, \iota_{3}>0$ are positive constants. First we define $\mathbf{e}_{\mathrm{p}}=[\mathbf{e}_{x},\mathbf{e}_{y},\mathbf{e}_{\mathrm{aux},\mathbf{v}_{x}},\mathbf{e}_{\mathrm{aux},\mathbf{v}_{y}}]$. Then the Lyapunov candidate $V_{\mathrm{p}}$ is constructed by~\cite{CLF_Construction}:
\begin{equation}\label{xy_lyapunov}
    V_{\mathrm{p}}=\frac{1}{2}\mathbf{e}_{\mathrm{p}}^{\top}\mathbf{e}_{\mathrm{p}}.
\end{equation}
The time derivative of the CLF candidate can be derived by
\begin{equation}
\begin{split}    \dot{V}_{\mathrm{p}}
=a_{xy}(\mathbf{x})+b_{xy}(\mathbf{x})\mathbf{u}_{xy}.
\end{split}    
\end{equation}
where $\mathbf{u}_{xy}=[\mathbf{u}_{x},\mathbf{u}_{y}]^{\top}=
[\phi, \theta]^{\top}$. Based on the dynamical model given in~\eqref{Quadrotor_model} and relations
\begin{equation}
\begin{aligned} \ddot{x}=\frac{\mathbf{u}_{x} s\psi-\mathbf{u}_{y} c\psi}{g}, \quad\ddot{y}=\frac{\mathbf{u}_{x} c\psi+\mathbf{u}_{y} s\psi}{g},
\end{aligned}
\end{equation}
we can obtain that
\begin{small}
\begin{equation*}
\begin{split}
a_{xy}(\mathbf{x})&=\mathbf{e}_{\mathrm{p}}^{\top}\cdot\left[\dot{\mathbf{e}}_{x}, \dot{\mathbf{e}}_{y}, -\ddot{x}_{d}+\iota_{2}(\mathbf{v}_{x}-\dot{\mathbf{x}}_{d}), -\ddot{y}_{d}+\iota_{3}(\mathbf{v}_{y}-\dot{\mathbf{y}}_{d})\right], \\
\mathbf{b}_{xy}(\mathbf{x})&=[\mathbf{e}_{\mathrm{aux},\mathbf{v}_{x}}\cdot s\psi/g+\mathbf{e}_{\mathrm{aux},\mathbf{v}_{y}}\cdot c\psi/g,\\
&\quad-\mathbf{e}_{\mathrm{aux},\mathbf{v}_{x}}\cdot c\psi/g+\mathbf{e}_{\mathrm{aux},\mathbf{v}_{y}}\cdot s\psi/g].
\end{split}
\end{equation*}
\end{small}

Next, suppose that the safety sets are defined by the following barrier functions: 
\begin{equation}\label{constructed_CBFs}
    h_{l}(x,y)=(x-o_{l,x})^{2}+(y-o_{l,y})^{2}-r_{o,l}^{2}\geq 0,
\end{equation}
where $l\in\{1,\cdots,p\}$. Then the time derivatives of $h_{l}(x,y)$ are given by
\begin{equation}\label{time_derivative_CBF}
\begin{split}
    \dot{h}_{l}(x,y)&=2(x-o_{l,x})\dot{x}+2(y-o_{l,y})\dot{y},\\
    \ddot{h}_{l}(x,y)&=2\dot{x}^{2}+2(x-o_{l,x})\ddot{x}+2\dot{y}^{2}+2(y-o_{l,y})\ddot{y}.
\end{split}
\end{equation}
We employ the smooth approximation technique in Section~~\ref{Approximation_technique} to handle multiple barrier functions. Similar to the altitude case, the parameters for $c_{xy}(\mathbf{x})$ and $\mathbf{d}_{xy}(\mathbf{x})$ can be obtained with the same manner as~\eqref{CBF_parameters}. The only unclear parameters $s_{xy}(\mathbf{x})$ and $t_{xy}(\mathbf{x})$ are derived as follows:
\begin{equation*}
\begin{split}
&s_{xy}(\mathbf{x})=\sum\nolimits_{l=1}^{p}\left[\exp\left(-\eta h_{l}(\mathbf{x})\right)\cdot(2\dot{x}^{2}+2\dot{y}^{2}))\right],\\
&t_{xy}(\mathbf{x})=\sum\nolimits_{l=1}^{p}2\exp\left(-\eta h_{l}(\mathbf{x})\right)/g\\
&\cdot[(x-o_{l,x})s\psi+(y-o_{l,y})c\psi,(x-o_{l,x})c\psi+(y-o_{l,y})s\psi].
\end{split}
\end{equation*}
Next, we also utilize~\eqref{CBF_parameters} to compute both $c_{xy}(\mathbf{x})$ and $d_{xy}(\mathbf{x})$. With these calculations in place, we are now prepared to apply the universal formula ~\eqref{QP_Control_Law_relaxed} or~\eqref{QP_Sequential} to determine $\mathbf{u}_{xy}$.
\begin{Lem}\label{Safe_stabilization_XY}
     Suppose that i) the parameters $a_{xy}(\mathbf{x})$, $\mathbf{b}_{xy}(\mathbf{x})$, $c_{xy}(\mathbf{x})$ , and $\mathbf{d}_{xy}(\mathbf{x})$, are available, and ii) assume the CLF in~\eqref{xy_lyapunov} is compatible with the approximated CBF $h(x,y)$ for system~\eqref{Quadrotor_model}, then the universal formula in~\eqref{QP_Control_Law_relaxed} ensures a safe $X-Y$ position stabilization.
\end{Lem}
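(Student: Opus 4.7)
The plan is to mirror the argument used in the proof of Lemma~\ref{Safe_stabilization}. Under the compatibility assumption (ii), Theorem~\ref{Compatible_Qp} applies directly with the parameters $a_{xy}(\mathbf{x})$, $\mathbf{b}_{xy}(\mathbf{x})$, $c_{xy}(\mathbf{x})$, and $\mathbf{d}_{xy}(\mathbf{x})$. The universal formula~\eqref{QP_Control_Law_relaxed} then produces a control $\mathbf{u}_{xy}$ that simultaneously satisfies the CLF inequality $a_{xy}(\mathbf{x})+\mathbf{b}_{xy}(\mathbf{x})\mathbf{u}_{xy}\leq 0$ and the ECBF inequality $c_{xy}(\mathbf{x})+\mathbf{d}_{xy}(\mathbf{x})\mathbf{u}_{xy}\geq 0$ in the sense of Definition~\ref{ECBF_Conditon}.

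From the first inequality I would read off $\dot{V}_{\mathrm{p}}\leq 0$ along closed-loop trajectories, which, given the positive definiteness and radial unboundedness of $V_{\mathrm{p}}$ in~\eqref{xy_lyapunov}, drives the composite error $\mathbf{e}_{\mathrm{p}}$ to zero. Since $\mathbf{e}_{x},\mathbf{e}_{y}\to 0$ and $\mathbf{e}_{\mathrm{aux},\mathbf{v}_{x}},\mathbf{e}_{\mathrm{aux},\mathbf{v}_{y}}\to 0$, the definitions in~\eqref{X_Y_error_dynamics} ensure that the lateral position converges to the desired setpoint, establishing the stabilization part of the claim.

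For safety, the ECBF inequality together with Definition~\ref{ECBF_Conditon} implies forward invariance of the $0$-superlevel set of the approximated barrier $h(x,y)$ built via the log-sum-exp formula~\eqref{CBFs_Approx}. Invoking the bound $h(x,y)\leq \min_{l\in\{1,\dots,p\}} h_{l}(x,y)$ established in Section~\ref{Approximation_technique}, I conclude that $h(x,y)\geq 0$ along the closed-loop trajectory forces $h_{l}(x,y)\geq 0$ for every $l$, so each individual obstacle constraint~\eqref{constructed_CBFs} is respected throughout.

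The main obstacle I anticipate is purely bookkeeping: one has to verify that the parameters $c_{xy}(\mathbf{x})$ and $\mathbf{d}_{xy}(\mathbf{x})$, assembled through the quantities $\xi_{xy},\zeta_{xy},\chi_{xy},\upsilon_{xy}$ together with the auxiliary terms $s_{xy}(\mathbf{x}),t_{xy}(\mathbf{x})$, correctly reconstitute the full relative-degree-two ECBF expression $\mathcal{L}_{\mathbf{f}}^{\,r}h+\mathbf{K}^{\top}\mathcal{H}+\mathcal{L}_{\mathbf{g}}\mathcal{L}_{\mathbf{f}}^{\,r-1}h\,\mathbf{u}_{xy}\geq 0$, noting that $h(x,y)$ is independent of $\phi,\theta$ and the control enters only through the acceleration. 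This reduces to cross-checking the Lie-derivative expressions in~\eqref{time_derivative_CBF} against the definitions in~\eqref{CBF_parameters} derived just before the lemma, so no new machinery beyond the compatibility argument of Theorem~\ref{Compatible_Qp} is required.
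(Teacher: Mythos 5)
Your proposal is correct and follows essentially the same route as the paper: the paper's own proof of Lemma~\ref{Safe_stabilization_XY} is a one-line reference back to Lemma~\ref{Safe_stabilization}, and you reproduce exactly that argument — invoke Theorem~\ref{Compatible_Qp} under compatibility to get the CLF and ECBF inequalities, conclude stabilization from $V_{\mathrm{p}}$ and forward invariance of the approximated barrier's superlevel set, and finish with the under-approximation bound $h(x,y)\leq \min_{l} h_{l}(x,y)$ to recover each individual obstacle constraint. The extra bookkeeping you flag about reconstituting $c_{xy}$ and $\mathbf{d}_{xy}$ is consistent with the derivations the paper carries out just before the lemma, so no new ideas are needed.
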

\begin{proof}
This lemma can be proved by following the same steps as in Lemma~\ref{Safe_stabilization}.
\end{proof}
Furthermore, if the compatibility condition stated in Lemma~\ref{Safe_stabilization_XY} is not met, we will employ the universal formula as detailed in~\ref{QP_Sequential}. This solution is akin to the altitude controller and will yield the same conclusion.
\subsubsection{Attitude Controller}
Upon receiving commands for the desired Euler angles, namely $\phi_{d}$, $\theta_{d}$, and $\psi_{d}$, the role of the attitude controller is to quickly stabilize these angles. Typically, the attitude controller operates at a significantly higher frequency compared to the execution of the position controller. Furthermore, the attitude controller can also be regarded as a regulator for the rate controller, determining the suitable reference values for angular velocity to achieve stability at a specific desired angular orientation for the quadrotor. In this paper, we utilize a PID controller to attain attitude control. Notably, the geometric controller outlined in~\cite{Geometric_Controller} is also a viable choice.
\section{Universal Formula with Disturbances and Input Constraints}\label{Universal_with_Disturbance}
In this section, we first introduce the concept of ISS~\cite{ISS} and ISSf~\cite{ISSf-High-Order} to account for input disturbances in the system dynamics. Next, we address the issue of potential control input violations when utilizing the universal formula. To mitigate this concern, we develop a projection-based strategy, which effectively maps infeasible universal control inputs into the admissible range for control inputs.
\subsection{Universal Formula with Disturbance Rejection}
We regard~\eqref{Affine_Control_System} as our nominal dynamic model and introduce perturbations to its dynamics with the disturbance $\bm{\omega}: \mathbb{R}^{n}\rightarrow\mathbb{R}^{n}$. Subsequently, we investigate the following dynamics:
\begin{equation}\label{Affine_Control_Disturbed}
	    \dot{\mathbf{x}}=\mathbf{f}(\mathbf{x})+\mathbf{g}(\mathbf{x})\mathbf{u}+\bm{\omega}(\mathbf{x}),
\end{equation}
where $\|\bm{\omega}\|_{\infty}\leq \bar{\omega}$, and $\bar{\omega}$ is a constant.
\begin{Def}\label{ISS_Def}
    (Input-to-state stable, (ISS)~\cite{ISS}) The system~\eqref{Affine_Control_Disturbed} is ISS if there exists $\vartheta\in\mathcal{KL}$ and $\iota\in\mathcal{K}_{\infty}$ such that
    \begin{equation}
        \|\mathbf{x}(t)\|\leq\vartheta(\|\mathbf{x}(0)\|,t)+\iota(\|\bm{\omega}\|_{\infty}).
    \end{equation}
\end{Def}
\begin{Lem}\label{ISS_Proposition}
(\cite{taylor2023robust}) Let $V: \mathbb{R}^{n}\rightarrow\mathbb{R}_{+}$ be a continuously differentiable function. If there exist functions $k_{1}, k_{2}, k_{3},\varepsilon\in\mathbb{R}_{>0}$ such that 
\begin{equation}\label{ISS_Condition}
    \begin{split}
    &\qquad\quad k_{1}\|\mathbf{x}\|^{2}\leq V(\mathbf{x})\leq k_{2}\|\mathbf{x}\|^{2},\\
        &a(\mathbf{x})+\mathbf{b}(\mathbf{x})\mathbf{u}+1/\varepsilon\left\|\frac{\partial V}{\partial\mathbf{x}}(\mathbf{x})\right\|^{2}\leq -k_{3}\|\mathbf{x}\|^{2},
    \end{split}
\end{equation}
then the system \eqref{Affine_Control_System} is ISS.
\end{Lem}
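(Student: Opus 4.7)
The plan is to verify that $V$ is an ISS-Lyapunov function for the disturbed dynamics~\eqref{Affine_Control_Disturbed} and then invoke the standard Sontag--Wang characterization of ISS. First I would compute the time derivative of $V$ along trajectories of~\eqref{Affine_Control_Disturbed}, obtaining $\dot{V}=L_{\mathbf{f}}V(\mathbf{x})+L_{\mathbf{g}}V(\mathbf{x})\mathbf{u}+\frac{\partial V}{\partial \mathbf{x}}(\mathbf{x})\bm{\omega}(\mathbf{x})$. Using the definitions $a(\mathbf{x})=L_{\mathbf{f}}V(\mathbf{x})+\lambda V(\mathbf{x})$ and $\mathbf{b}(\mathbf{x})=L_{\mathbf{g}}V(\mathbf{x})$ from the CLF condition~\eqref{CLF_Condition}, this rewrites as $\dot{V}=a(\mathbf{x})+\mathbf{b}(\mathbf{x})\mathbf{u}-\lambda V(\mathbf{x})+\frac{\partial V}{\partial \mathbf{x}}\bm{\omega}$.

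The key step is to control the cross term via a weighted Young's inequality, specifically $\frac{\partial V}{\partial \mathbf{x}}\bm{\omega}\leq\frac{1}{\varepsilon}\left\|\frac{\partial V}{\partial \mathbf{x}}\right\|^{2}+\frac{\varepsilon}{4}\|\bm{\omega}\|^{2}$. The particular weights $1/\varepsilon$ and $\varepsilon/4$ are forced by the requirement that the resulting $\frac{1}{\varepsilon}\left\|\frac{\partial V}{\partial\mathbf{x}}\right\|^{2}$ term cancel exactly against the corresponding term in the assumed dissipation inequality~\eqref{ISS_Condition}. Substituting this bound into the expression for $\dot{V}$, applying~\eqref{ISS_Condition}, and discarding the nonpositive term $-\lambda V(\mathbf{x})$ (which is valid since $V\geq 0$ and $\lambda>0$), I would obtain the clean estimate $\dot{V}\leq -k_{3}\|\mathbf{x}\|^{2}+\frac{\varepsilon}{4}\|\bm{\omega}\|^{2}\leq -k_{3}\|\mathbf{x}\|^{2}+\frac{\varepsilon}{4}\bar{\omega}^{2}$, where the last step uses $\|\bm{\omega}\|_{\infty}\leq\bar{\omega}$.

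Finally, combining this dissipation inequality with the quadratic sandwich $k_{1}\|\mathbf{x}\|^{2}\leq V(\mathbf{x})\leq k_{2}\|\mathbf{x}\|^{2}$ identifies $V$ as an ISS-Lyapunov function in the sense of Sontag and Wang, with class-$\mathcal{K}_{\infty}$ functions $\alpha(s)=k_{3}s^{2}$ on the state side and $\gamma(s)=\varepsilon s^{2}/4$ on the disturbance side. The existence of such a Lyapunov function is a well-known sufficient condition for ISS (see, e.g., Khalil, \emph{Nonlinear Systems}, Theorem~4.19), which directly produces the $\mathcal{KL}$-$\mathcal{K}_{\infty}$ bound required by Definition~\ref{ISS_Def}. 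I expect the only delicate step to be the matching of constants in Young's inequality so that the cancellation with the hypothesis term is exact; once that algebraic bookkeeping is done, the remainder is a direct invocation of the standard ISS-Lyapunov theorem and requires no further estimates.
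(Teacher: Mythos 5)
Your proof is correct. The paper itself gives no proof of this lemma --- it is imported verbatim from \cite{taylor2023robust}, and the authors simply point the reader there --- but your argument is exactly the standard one used in that line of work: differentiate $V$ along the perturbed dynamics \eqref{Affine_Control_Disturbed}, complete the square on the cross term $\frac{\partial V}{\partial\mathbf{x}}\bm{\omega}\leq\frac{1}{\varepsilon}\left\|\frac{\partial V}{\partial\mathbf{x}}\right\|^{2}+\frac{\varepsilon}{4}\|\bm{\omega}\|^{2}$ so that the $\frac{1}{\varepsilon}\|\cdot\|^{2}$ term is absorbed by the hypothesis, discard $-\lambda V\leq 0$, and invoke the dissipation-form ISS--Lyapunov theorem with $\alpha(s)=k_{3}s^{2}$ and $\gamma(s)=\varepsilon s^{2}/4$. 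The algebra checks out (in particular $ab\leq\frac{a^{2}}{\varepsilon}+\frac{\varepsilon b^{2}}{4}$ is exact as a square), so there is nothing to add.
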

To introduce the concept of ISSf, we begin by defining the following sets.
\begin{equation}\label{Invariant_Set_Disturbed}
		\begin{aligned}
			\mathcal{C}_{\bm{\omega}} & \triangleq\left\{\mathbf{x} \in \mathbb{R}^{n}: h(\mathbf{x})+\gamma(\|\bm{\omega}\|_{\infty}) \geq 0\right\} \\
			\partial \mathcal{C}_{\bm{\omega}} & \triangleq\left\{\mathbf{x} \in \mathbb{R}^{n}: h(\mathbf{x})+\gamma(\|\bm{\omega}\|_{\infty})=0\right\} \\
			\operatorname{Int}(\mathcal{C}_{\bm{\omega}}) & \triangleq\left\{\mathbf{x} \in \mathbb{R}^{n}: h(\mathbf{x})+\gamma(\|\bm{\omega}\|_{\infty})>0\right\},
		\end{aligned}
\end{equation}
where $\gamma\in\mathcal{K}_{\infty}$, $\|\bm{\omega}\|_{\infty}\leq\overline{\omega}$, $\overline{\omega}>0$ is a constant.  

Similar to Definition~\ref{ECBF_Conditon}, we consider the state constraints with high relative degrees. In this case, we need to extend the concept of ECBF by accounting for the model uncertainties $\bm{\omega}(\mathbf{x})$. In this regard, we recall the following lemma.
\begin{Lem}\label{ISSf_Theory}
(\cite{taylor2023robust},~\cite{ISSf}) Consider $\mathcal{C}_{\bm{\omega}}$ be defined by \eqref{Invariant_Set_Disturbed} and $h$ is an $r$th continuously differentiable function. If $\mathbf{x}_{0}\in\mathcal{C}_{\bm{\omega}}$ and there exists a controller $\mathbf{u}:\mathbb{R}^{n}\rightarrow\mathbb{R}^{m}$ such that 
\begin{equation}\label{HOCBF_ISSf}
    \begin{split}
        &c(\mathbf{x})+\mathbf{d}(\mathbf{x})\mathbf{u}-\frac{1}{\xi}\left\|\frac{\partial^{(r)} h}{\partial\mathbf{x}^{(r)}}(\mathbf{x})\right\|\geq 0,
    \end{split}
\end{equation}
where $r$ indicates that $h$ is a $r$th continuously differentiable function, $\xi>0$ is a constant. Then the system \eqref{Affine_Control_System} is ISSf, wherein the set $\mathcal{C}_{\bm{\omega}}$ is forward invariant.
\end{Lem}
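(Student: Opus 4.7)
The plan is to show forward invariance of the enlarged safety set $\mathcal{C}_{\bm{\omega}}$ by tracking how $h$ evolves along trajectories of the perturbed system \eqref{Affine_Control_Disturbed} and exploiting condition \eqref{HOCBF_ISSf} to absorb the worst-case contribution of the disturbance $\bm{\omega}$ through a Young-type inequality, in the spirit of~\cite{ISSf,taylor2023robust}. Once a scalar lower bound of the form $h^{(r)}(\mathbf{x})+\mathbf{K}^{\top}\mathcal{H}\geq -\gamma_{0}(\|\bm{\omega}\|_{\infty})$ is established, the Hurwitz choice of $\mathbf{K}$ underlying the ECBF construction converts it into an ISSf estimate on $h$ by the comparison lemma.

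First I would differentiate $h(\mathbf{x}(t))$ up to order $r$ along \eqref{Affine_Control_Disturbed}. Because $h$ has relative degree $r$ for the nominal part $\mathbf{f}+\mathbf{g}\mathbf{u}$, the nominal contribution at order $r$ is $\mathcal{L}_{\mathbf{f}}^{r} h(\mathbf{x}) + \mathcal{L}_{\mathbf{g}}\mathcal{L}_{\mathbf{f}}^{r-1}h(\mathbf{x})\mathbf{u}$, so that combined with the lower-order terms encoded in $\mathbf{K}^{\top}\mathcal{H}$ from Definition~\ref{ECBF_Conditon}, the nominal part of $h^{(r)} + \mathbf{K}^{\top}\mathcal{H}$ is $c(\mathbf{x})+\mathbf{d}(\mathbf{x})\mathbf{u}$. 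The disturbance then contributes at most an additional term of the form $\bigl\langle \partial^{(r)} h/\partial\mathbf{x}^{(r)},\,\bm{\omega}\bigr\rangle$, which I would lower-bound via Cauchy--Schwarz and Young's inequality as
\begin{equation*}
\Bigl\langle \tfrac{\partial^{(r)} h}{\partial\mathbf{x}^{(r)}},\,\bm{\omega}\Bigr\rangle \;\geq\; -\tfrac{1}{\xi}\Bigl\|\tfrac{\partial^{(r)} h}{\partial\mathbf{x}^{(r)}}\Bigr\| \;-\; \tfrac{\xi}{4}\|\bm{\omega}\|_{\infty}^{2}.
\end{equation*}
Adding \eqref{HOCBF_ISSf} cancels the $\tfrac{1}{\xi}\|\cdot\|$ slack and yields $h^{(r)}(\mathbf{x}) + \mathbf{K}^{\top}\mathcal{H} \geq -\gamma_{0}(\|\bm{\omega}\|_{\infty})$ with $\gamma_{0}(s)=\tfrac{\xi}{4}s^{2}\in\mathcal{K}_{\infty}$.

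Finally, a Hurwitz choice of $\mathbf{K}$ allows one to view $\mathcal{H}$ as the state of a stable linear system driven by the residual $-\gamma_{0}(\|\bm{\omega}\|_{\infty})$, and a standard comparison argument yields $h(\mathbf{x}(t)) \geq -\gamma(\|\bm{\omega}\|_{\infty})$ for every $t\geq 0$ whenever $\mathbf{x}(0)\in\mathcal{C}_{\bm{\omega}}$, i.e.\ the forward invariance of $\mathcal{C}_{\bm{\omega}}$ with the class-$\mathcal{K}_{\infty}$ gain $\gamma$ appearing in \eqref{Invariant_Set_Disturbed}. The main obstacle is the clean handling of $\bm{\omega}$ across all intermediate orders of differentiation, since in principle it can perturb each $\mathcal{L}_{\mathbf{f}}^{k}h$ with $k<r$ rather than only the top order; the standard remedy is a matching assumption placing $\bm{\omega}$ in the range of $\mathbf{g}$, or otherwise iterated Young-type splittings at each order that are then consolidated into the single margin captured by $\tfrac{1}{\xi}\|\partial^{(r)}h/\partial\mathbf{x}^{(r)}\|$ in \eqref{HOCBF_ISSf}.
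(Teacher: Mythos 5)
The paper does not actually prove this lemma; it is imported verbatim from~\cite{taylor2023robust} and~\cite{ISSf}, so there is no internal proof to compare against. Your skeleton (differentiate $h$ along the perturbed flow, absorb the disturbance with a Young-type splitting, then run a comparison argument through the ECBF chain with a Hurwitz $\mathbf{K}$) is the right strategy and matches what those references do. However, two of your steps have genuine gaps. First, your Young-type inequality is false as written: the bound $\langle a,\bm{\omega}\rangle \geq -\tfrac{1}{\xi}\|a\| - \tfrac{\xi}{4}\|\bm{\omega}\|_{\infty}^{2}$ with the \emph{unsquared} norm $\|a\|$ cannot hold for all $a$, since the left side scales linearly in $\|a\|\,\|\bm{\omega}\|$ while the right side scales only as $\tfrac{1}{\xi}\|a\|$; it holds only if $\|\bm{\omega}\|_{\infty}\leq 1/\xi$. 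The correct completion of the square gives $\langle a,\bm{\omega}\rangle \geq -\tfrac{1}{\xi}\|a\|^{2} - \tfrac{\xi}{4}\|\bm{\omega}\|^{2}$, which requires the margin in \eqref{HOCBF_ISSf} to be $\tfrac{1}{\xi}\|\partial^{(r)}h/\partial\mathbf{x}^{(r)}\|^{2}$ (squared, as in the ISS condition \eqref{ISS_Condition}). You should either prove the squared version or add the explicit smallness hypothesis $\bar{\omega}\leq 1/\xi$; you cannot simply reproduce the unsquared expression and call it Young's inequality.

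Second, the issue you flag at the end --- that an unmatched $\bm{\omega}$ perturbs every intermediate derivative $\mathcal{L}_{\mathbf{f}}^{k}h$ with $k<r$, not just the top order --- is not a side remark but the central technical content of the high-order ISSf result, and you leave it unresolved. For $r\geq 2$ the quantity $\mathcal{H}=[h,\mathcal{L}_{\mathbf{f}}h,\dots,\mathcal{L}_{\mathbf{f}}^{r-1}h]^{\top}$ no longer consists of the true time derivatives of $h$ along \eqref{Affine_Control_Disturbed}, so the identity ``nominal part of $h^{(r)}+\mathbf{K}^{\top}\mathcal{H}$ equals $c(\mathbf{x})+\mathbf{d}(\mathbf{x})\mathbf{u}$'' does not describe the actual evolution of $h$, and a single top-order margin cannot absorb the lower-order perturbations. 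You must either state the matching assumption ($\bm{\omega}$ in the range of $\mathbf{g}$, or $\mathcal{L}_{\bm{\omega}}\mathcal{L}_{\mathbf{f}}^{k}h=0$ for $k<r-1$) as a hypothesis, or carry out the iterated splittings and show how they consolidate into the stated condition. Relatedly, the final comparison step needs more than $\mathbf{x}_{0}\in\mathcal{C}_{\bm{\omega}}$: the ECBF argument requires the full chain $\mathcal{H}(\mathbf{x}_{0})$ to lie in the nested family of superlevel sets induced by the pole placement of $\mathbf{K}$, and requires $\mathbf{K}$ to make the companion matrix Hurwitz with suitably ordered real poles --- neither of which follows from the lemma's hypotheses as you have used them.
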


We now apply Lemma~\ref{ISS_Proposition} and  Lemma~\ref{ISSf_Theory} to address the safe stabilization problem of a quadrotor, which takes the impact of input disturbances into account. 

As a consequence of thrust generation and vehicle motion, the quadrotor dynamics may include body drag and parasitic aerodynamic forces~\cite{Disturbance_Model}. In this case, we modify the dynamics~\eqref{Quadrotor_model} to
\begin{equation}\label{Quadrotor_model_disturb}
\begin{split}
\dot{\mathbf{p}} &=\mathbf{v},\qquad\qquad\quad\,\,\,\, m \dot{\mathbf{v}} =m g \mathbf{e}_{3}-f \mathbf{R} \mathbf{e}_{3}+\mathbf{R}f_{\mathrm{a}}, \\
\dot{\mathbf{R}}&=\mathbf{R}\bm{\Omega}^{\times}\qquad\qquad\,\,
\mathbf{J}\dot{\bm{\Omega}}=-\bm{\Omega}\times\mathbf{J}\bm{\Omega}+\bm{\tau},
\end{split}
\end{equation}
where $f_{\mathrm{a}}$ denotes the body drag and parasitic aerodynamic forces due to the trust generation and vehicle motion for the quadrotor. In this case, $\mathbf{R}f_{\mathrm{a}}$ is recognized as the disturbance term $\bm{\omega}(\mathbf{x})$ as in~\eqref{Affine_Control_Disturbed}. 

For system~\eqref{Quadrotor_model_disturb}, we can effectively follow the procedure outlined in Section~\ref{Structure} for constructing a cascaded control architecture. Note that we will continue to utilize the Lyapunov functions and barrier functions presented in~\eqref{Altitude_Lypaunov},~\eqref{CBFs},~\eqref{xy_lyapunov}, and~\eqref{constructed_CBFs}. The only distinction lies in the stability and safety condition, where we employ the ISS-CLF and ISSf-CBF conditions as provided in~\eqref{ISS_Condition} and~\eqref{HOCBF_ISSf}. As claimed in Lemma~\ref{ISSf_Theory} and outlined in Definition~\ref{ISS_Def}, we can ensure both the forward invariance of a smaller safety set $\mathcal{C}_{\bm{\omega}}$ and the convergence of a neighboring region around the equilibrium point. One can follow the proof in~\cite{taylor2023robust} and~\cite{ISSf} to obtain the conclusions.
\begin{figure}[tp]
 \centering
    \makebox[0pt]{%
    \includegraphics[width=2.2in]{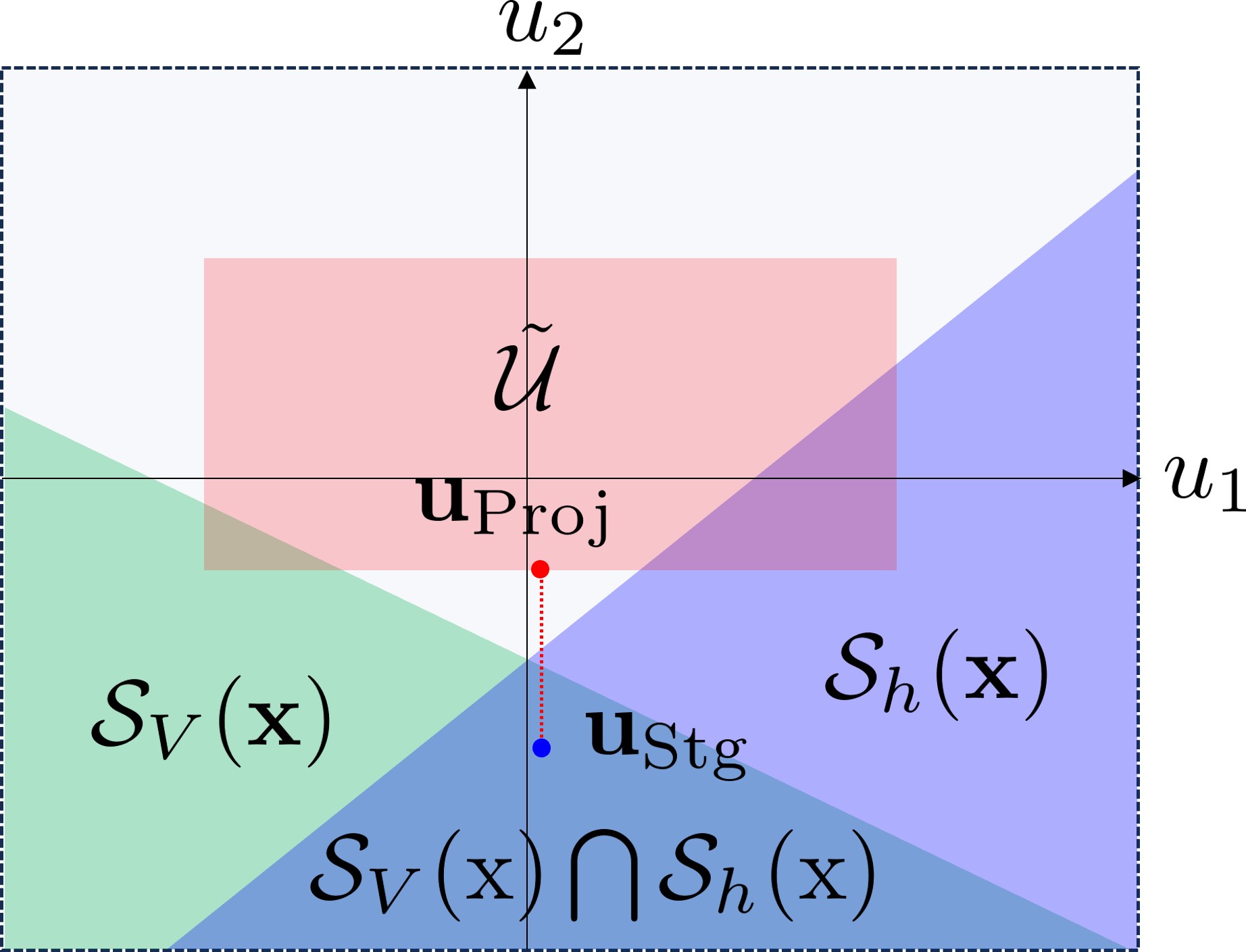}}
    \caption{A graphical interpretation of the projection idea.}
    \label{Projection_Inter}
\end{figure}
\subsection{Universal Formula with Input Constraints}
To account for the limited capability of the attitude controller in stabilizing a specific range of angles and the constraints on quadrotor thrust, we need to establish constraints for both $\phi$ and $\theta$ control inputs, as well as bounds on thrust $f$. Consequently, the attainable control input vector $\tilde{\mathbf{u}}=[\phi,\theta,f]^{\top}$ generated by the vehicle will be confined within the admissible set $\tilde{\mathcal{U}}$.
\begin{equation}
     \tilde{\mathcal{U}}=\{\tilde{\mathbf{u}}\in\mathbb{R}^{m}|\tilde{\mathbf{u}}_{\mathrm{min}}\leq \tilde{\mathbf{u}}\leq \tilde{\mathbf{u}}_{\mathrm{max}}\},
\end{equation}
where $\tilde{\mathbf{u}}_{\mathrm{min}}$ and $\tilde{\mathbf{u}}_{\mathrm{max}}$ are constant vectors, which denote the minimal and maximal values of the control input, respectively.  

To ensure that the universal formula derived in Section~\ref{Main_results} and Section~\ref{Universal_with_Disturbance}, denoted as $\mathbf{u}_{\mathrm{Stg}}$, remains within the bounds of $\tilde{\mathcal{U}}$, we perform a projection operation that aligns it with the nearest point within the set $\tilde{\mathcal{U}}$. To clarify this concept, we introduce the following definitions: $\mathcal{S}_{V}(\mathbf{x})=\{\mathbf{x}\in\mathbb{R}^{n}|a(\mathbf{x})+\mathbf{b}(\mathbf{x})\mathbf{u}\leq -\kappa\sigma(\mathbf{x})\}$ and $\mathcal{S}_{h}(\mathbf{x})=\{\mathbf{x}\in\mathbb{R}^{n}|c(\mathbf{x})+\mathbf{d}(\mathbf{x})\mathbf{u}\geq \rho\Gamma(\mathbf{x})\}$ We then provide a graphical interpretation in Fig.~\ref{Projection_Inter}~\footnotetext[1]{To facilitate graphical interpretation, we have chosen to set the control input dimension to 2. However, the graph interpretation can be extended to cases of arbitrary dimensions.}. We denote the obtained control input as $\mathbf{u}_{\mathrm{proj}}$, then the optimal control input can be obtained by solving the following optimization problem.
\begin{equation}\label{QP_projection}
        \begin{split}
            \min\limits_{\mathbf{u}_{\mathrm{proj}}\in\mathbb{R}^{m}}&\frac{1}{2}\|\mathbf{u}_{\mathrm{proj}}-\mathbf{u}_{\mathrm{Stg}}\|^{2}\\
            &\tilde{\mathbf{u}}_{\mathrm{min}}\leq \mathbf{u}_{\mathrm{Proj}}\leq \tilde{\mathbf{u}}_{\mathrm{max}}.
        \end{split}
    \end{equation}
To facilitate on-board implementation and avoid the need for solving the entire optimization problem, we use its analytical solution, which incorporates the use of a saturation function:
\begin{equation}\label{Saturation}
    \mathrm{SAT}(u)=\left\{\begin{array}{l}
u_{\mathrm{max}}, \quad u>u_{\mathrm{max}}, \\
u, \qquad\,\,\, u_{\mathrm{min}} \leq u \leq u_{\mathrm{max}}, \\
u_{\mathrm{min}}, \quad\,\, u<u_{\mathrm{min}}.
\end{array}\right.
\end{equation}
Consequently, applying $\phi_{\mathrm{Stg}}$, $\theta_{\mathrm{Stg}}$, and $f_{\mathrm{Stg}}$ to~\eqref{Saturation} allows us to determine the admissible control input $\mathbf{u}_{\mathrm{proj}}=[\mathrm{SAT}(\phi_{\mathrm{Stg}}),\mathrm{SAT}(\theta_{\mathrm{Stg}}),\mathrm{SAT}(f_{\mathrm{Stg}})]$. 
\begin{Rmk}
   As shown in Fig.~\ref{Projection_Inter}, the control input $\mathbf{u}_{\mathrm{Proj}}$ falls within the admissible range of control inputs, denoted as $\tilde{\mathcal{U}}$, but it lacks stability and safety guarantees, as it does not belong to either $\mathcal{S}_{V}(\mathbf{x})$ or $\mathcal{S}_{h}(\mathbf{x})$. This relaxation of the stability and safety requirements may be acceptable under certain circumstances, particularly when the system stability is not a strict necessity, and there are some safety margins. 
\end{Rmk}
\begin{figure*}[tp]
 \centering
 \hspace*{-2cm} 
    \makebox[0pt]{%
    \includegraphics[width=4.5in]{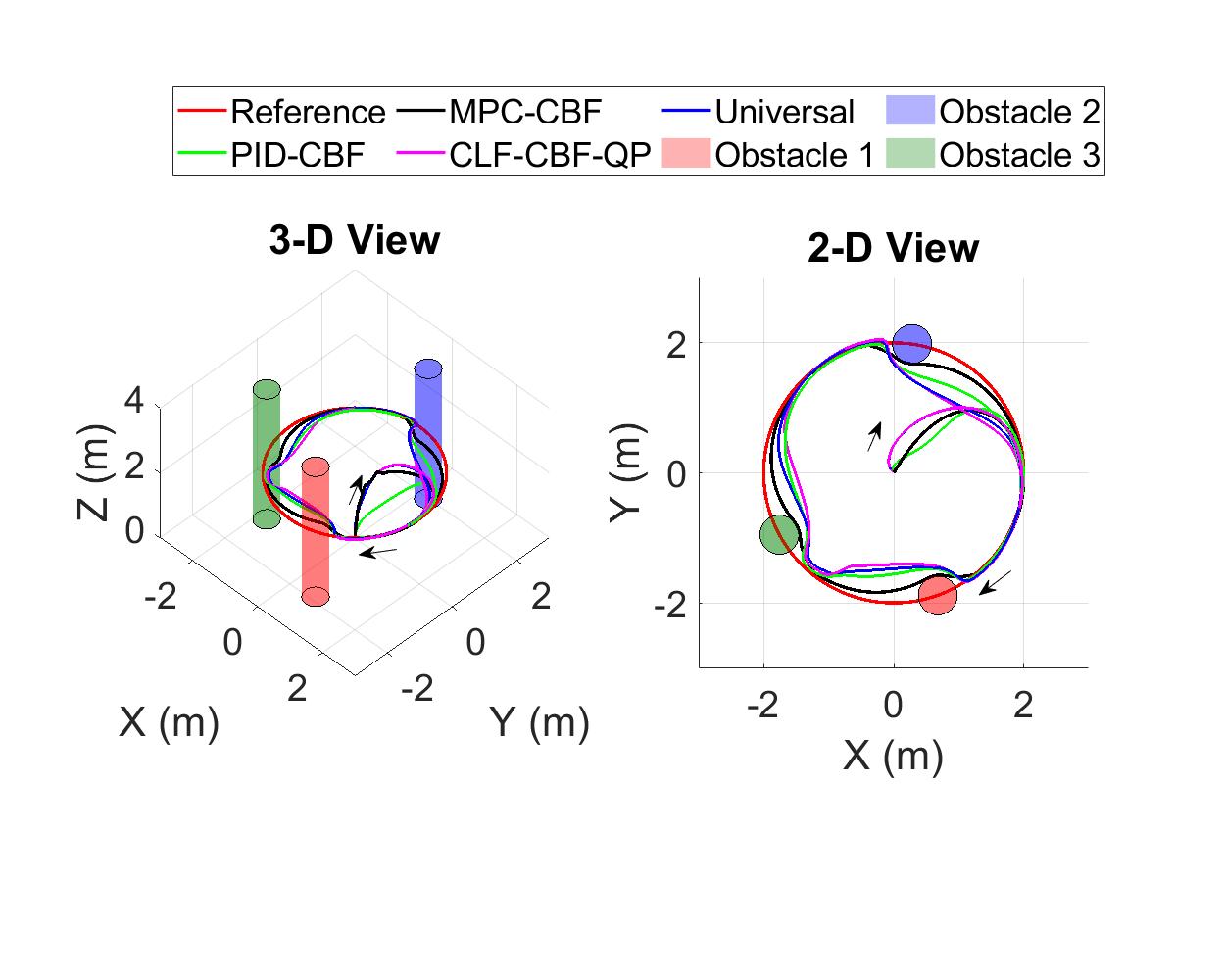}}
    \caption{Simulation - The environment settings and safe tracking performances of four different algorithms.}
    \label{Envi_and_Diff}
\end{figure*}
\begin{figure*}[tp]
 \centering
    \makebox[0pt]{%
    \includegraphics[width=5.5in]{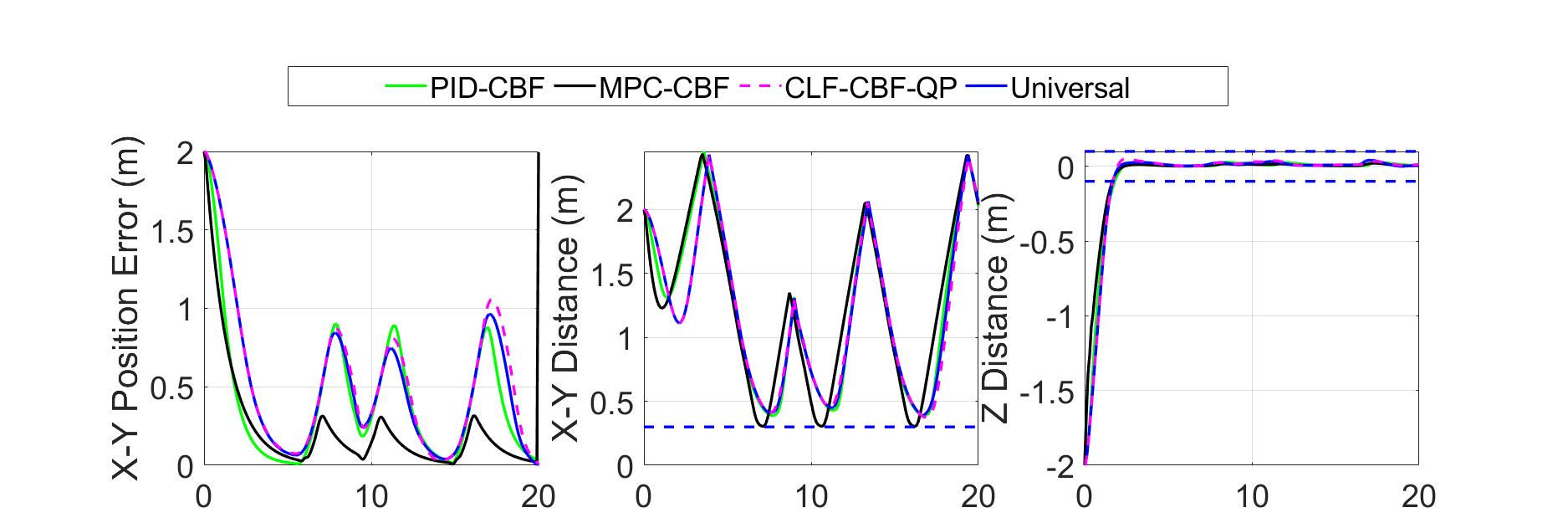}}
    \caption{Simulation - Tracking errors and safety evaluations of four algorithms in ensuring safe tracking in both $\text{X-Y}$ and $\text{Z}$ axes.}
    \label{Performance_Comparison}
\end{figure*}
\begin{figure*}[tp]
 \centering
    \makebox[0pt]{%
    \includegraphics[width=5.5in]{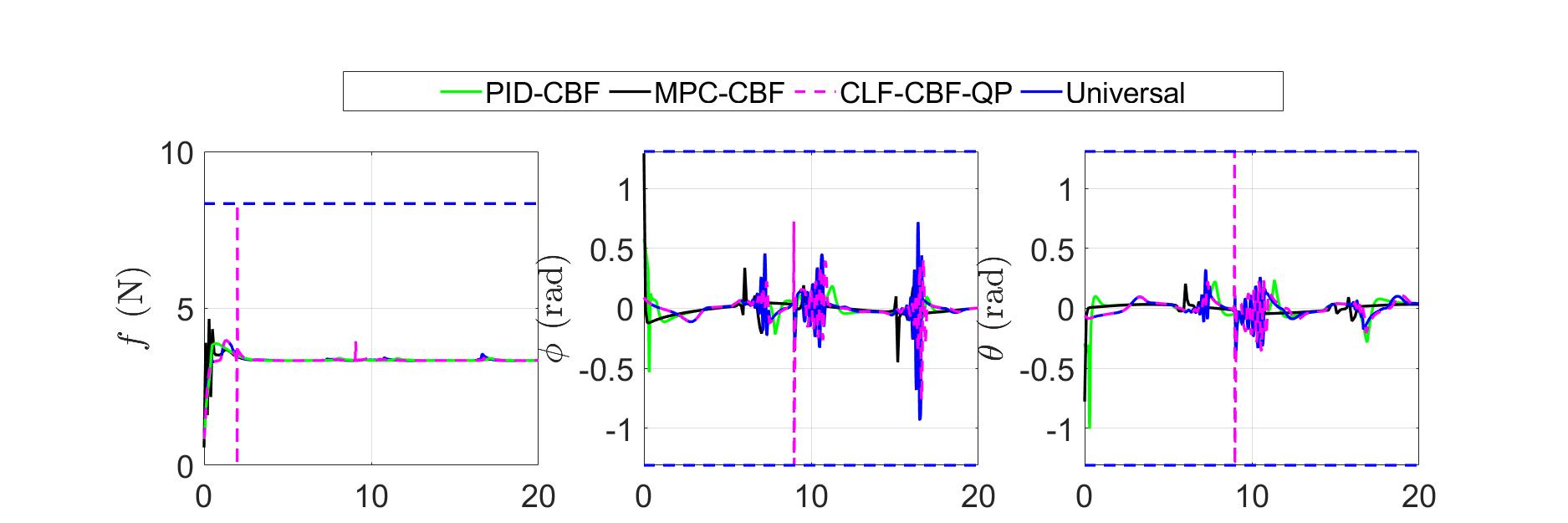}}
    \caption{Simulation - Control input behavior of the four different algorithms. }
    \label{Control_Input_Comparison}
\end{figure*}
\section{Simulation Results and Experimental Tests}
In this section, we present a comprehensive validation of our methodology with both simulation results and real-world experiment tests. Our objective is to demonstrate the effectiveness of our algorithm, which has a fast computation speed and superior performance as compared with the existing methods. To assess its performance, we conduct a comparative analysis against three state-of-the-art approaches, namely, PID-CBF~\cite{quadrotor_safe_stabilization3}, MPC-CBF~\cite{MPC_CBF_Implementation}, and CLF-CBF-QP~\cite{Zeroing_CBF}.
\subsection{Implementation Details}
\begin{enumerate}
    \item The simulated data were processed using Matlab 2019a on a 64-bit Intel Core i7-9750H with a 2.6-GHz processor. For onboard data processing, we employed Crazyflie 2.1, equipped with a 32-bit, 168-MHz ARM microcontroller featuring 1Mb flash memory and a floating-point unit.
    \item For simulations and experiments, we consider a desired reference trajectory $\mathbf{r}_{d}=[1\sin(0.4 t), 1\cos(0.4 t), 0.3]^{\top}$, which is a circle in the $\mathrm{X-Y}$ plane centered at the origin. As shown in Fig.~\ref{Envi_and_Diff}, the drone starts at the initial position $\mathbf{p}_{0}=[0,0,0]^{\top}$, and the control objective is to follow $\mathbf{r}_{d}$ and avoid three cylinder-shaped obstacles of radius $r_{1}=r_{2}=r_3=0.2$ centered at $[-0.9,0.7,0.3]^{\top}$, $[0.9,0.7,0.3]^{\top}$, and $[0.806,-0.806,0.3]^{\top}$, respectively. 
    The height of the cylinder is set to be $z=0.5\mathrm{m}$. The total running time for the trajectory is set to be $t_{\mathrm{total}}=20\mathrm{s}$.   
    \item In Section~\ref{Structure}, we have detailed our choice of Lyapunov functions, by specifically selecting~\eqref{Altitude_Lypaunov} for altitude stabilization and~\eqref{xy_lyapunov} for X-Y position stabilization. Furthermore, We have determined the parameter values for the barrier functions along the $\text{Z}$ direction as follows: For both simulations and experiments, we set $z_{\mathrm{min}}=0.25\mathrm{m}$ and $z_{\mathrm{max}}=0.35\mathrm{m}$, in accordance with the formulation described in equation \eqref{CBFs}.
    \item For the experiment, the real data are collected in the Control Systems Group laboratory of TU/e. The setup of the experiment is shown in Fig.~\ref{Environment_Exp}, which consists of one drone moving in a $3\mathrm{m}\times 3\mathrm{m}\times 3\mathrm{m}$ region with three cylinder-shaped obstacles. We use a VICON motion capture system to track both the quadrotor and obstacles and regard the data (composed of $(x,y,z)$ positions and velocities collected with a data capacity set to $100$ samples) as the true quantity.  
    \item In the simulation, the modeling parameters are set as follows: The mass of the quadrotor, denoted as $m$, is equal to $0.34$, which includes the weight of $4$ markers. The gravitational acceleration, represented as $g$, is approximately $9.81 \mathrm{m/s^{2}}$. The quadrotor's inertia matrix is measured as $\mathrm{diag}\left([\mathrm{J}_{x}, \mathrm{J}_{y}, \mathrm{J}_{z}]\right) = \mathrm{diag}\left([321.86260, 305.82542, 576.25587]\right) \times 10^{-6}\mathrm{m/s^{2}}$.
    \item For real-time communication, as well as subscribing to pose information and publishing setpoints via the Crazyradio PA USB dongle, we rely on the $\mathrm{crazyflie\_ros}$ API. To facilitate communication and low-level attitude control, we leverage the open-source Robot Operating System (ROS) package tailored for Crazyflie.
    \item To demonstrate the superiority of the proposed controller, we design simulations to compare it with three start-of-the-art safety controllers:  PID-CBF~\cite{quadrotor_safe_stabilization3}, MPC-CBF~\cite{MPC_CBF_Implementation}, and CLF-CBF-QP~\cite{Zeroing_CBF}. We record the following parameters that might be of interest to our readers:
\begin{itemize}
    \item Regarding the CLF specified in~\eqref{CLF_Condition}, we select $\lambda=2$. Concerning the CBF, we set the parameter matrix $\mathbf{K}$ to $\mathbf{K}=[4,4]^{\top}$.

    \item PID-CBF: In the PID-CBF approach, we utilize PID controllers for both tracking tasks. The specific parameter values are set as follows: $\mathbf{k}_{x}=[3.0,0.0,3.6]^{\top}$, $\mathbf{k}_{y}=[3.0,0.0,3.6]^{\top}$ and $\mathbf{k}_{z}=[3.0,0.0,3.0]^{\top}$. Then for attitude controller, $\mathbf{k}_{\phi}=[380.0,0.0,40.0]^{\top}$, $\mathbf{k}_{\theta}=[380.0,0.0,40.0]^{\top}$ and $\mathbf{k}_{\psi}=[380.0,0.0,40.0]^{\top}$.

    \item MPC-CBF: In the case of MPC-CBF, we set the prediction horizon to $N=10$, and we employ the solver ``IPOPT''  for optimization. 

    \item CLF-CBF-QP: Within the CLF-CBF-QP framework, the penalty factor in (33) of~\cite{Zeroing_CBF} is selected to be $\delta=50$. We use the solver ``quadprog'' for evaluation.
\end{itemize}

    \item We recommend the readers watch the video summary of the experiments to get a clearer view of the setup and results. It can be found at: https://youtu.be/kOHG0QJd0xg.
\end{enumerate}
\subsection{Simulation Results}
In Fig.\ref{Envi_and_Diff}, we present a snapshot illustrating the safe tracking behaviors achieved through the implementation of PID-CBF, MPC-CBF, CLF-CBF-QP, and the proposed universal formula controllers. To facilitate a comprehensive comparison, we provide both 3-D and 2-D visualizations of the tracking performance. As we see, these controllers generate trajectories that effectively avoid obstacles while tracking the desired trajectory $\mathbf{r}_{d}$. To quantify the comparisons, we 
define $\mathbf{e}_{xy}=\sqrt{\mathbf{e}_{x}^{2}+\mathbf{e}_{y}^{2}}$, which represents the tracking error in the $\mathrm{X-Y}$ direction. Then, $\mathbf{e}_{z}$ indicates the tracking error in the $\mathrm{Z}$ direction. To assess safety requirements, we analyze the quadrotor's minimal distance to obstacles using the following equation. 
\begin{equation*}
    h_{xy}=\min_{l\in\{1,\cdots,p\}}(h_{l}(x,y),\quad h_{z}=\min_{l\in\{1,\cdots,p\}}(h_{\mathrm{a},l}(z)).
\end{equation*}

Fig.~\ref{Performance_Comparison} presents a quantitative assessment of the safe tracking performance of the four controllers. Notably, all four algorithms exhibit commendable tracking performance in both $\mathrm{X-Y}$ and $\mathrm{Z}$ directions, and tracking errors tend to converge to zero if the obstacles are not encountered. However, MPC-CBF, as depicted in Fig.~\ref{Performance_Comparison}, showcases a superior tracking performance, as a benefit of its predictive capabilities. Nevertheless, it is important to acknowledge that this achievement comes at the cost of greater computational demands. In terms of safety, as indicated by $h_{xy}>0$ and $h_{z}>0$, we consistently maintain the predefined safety constraints, both in altitude and lateral $\mathrm{X-Y}$ control. Furthermore, in Fig.~\ref{Control_Input_Comparison}, we exhibit and compare the control inputs generated by these controllers. The control limits are set as follows:
\begin{equation}
    \begin{split}
       0 &=f_{\min}\leq f\leq f_{\max}=8.3385 \mathrm{N},\\
        -1.31 \mathrm{rad}&=\phi_{\min}\leq \phi\leq \phi_{\max}=1.31 \mathrm{rad},\\
        -1.31 \mathrm{rad}&=\theta_{\min}\leq \theta\leq \theta_{\max}=1.31 \mathrm{rad}.
    \end{split}
\end{equation}
It is evident that the control input is a continuous function that remains within the predefined control limits.

While the safe tracking performances are similar among all four algorithms, a notable difference arises in their computational speeds. To offer a fair comparison of these computational speeds, we conducted 100 trials and recorded the time required to achieve the final safe tracking results. Remarkably, the average execution time of the universal formula stands is approximately of only $1.31\mathrm{s}$, in contrast to the $29.95\mathrm{s}$ needed for CLF-CBF-QP and MPC-CBF (with a staggering $592.35\mathrm{s}$ for the latter). It is worth noting that PID-CBF exhibits a comparable computation speed to the universal formula, with a runtime of $1.401\mathrm{s}$. However, the distinct advantage of the universal formula lies in its provision of theoretically guaranteed stabilization. This observation strongly highlights the suitability of the universal formula for onboard implementation, particularly in scenarios where computational resources are limited.

\subsection{Experimental Tests}
Contrary to simulations, implementing an optimization-based algorithm onboard such as the Crazyflie 2.1 platform has posed significant challenges due to the limited computation resources. Therefore, we only compare the performance of the universal formula with the PID-CBF method for the experiment. The experimental setup is illustrated in Fig.~\ref{Environment_Exp}. We evaluate the algorithm using a safety-oriented tracking scenario, wherein a single drone is tasked with tracking a circular reference trajectory while avoiding three stationary obstacles. It's important to note that real-world settings inherently introduce disturbances. Consequently, we exclusively employed the universal formula, i.e., the results introduced in Section~\ref{Universal_with_Disturbance}, which accounts for disturbances in this context. 
\begin{figure}[tp]
 \centering
    \makebox[0pt]{%
    \includegraphics[width=3.5in]{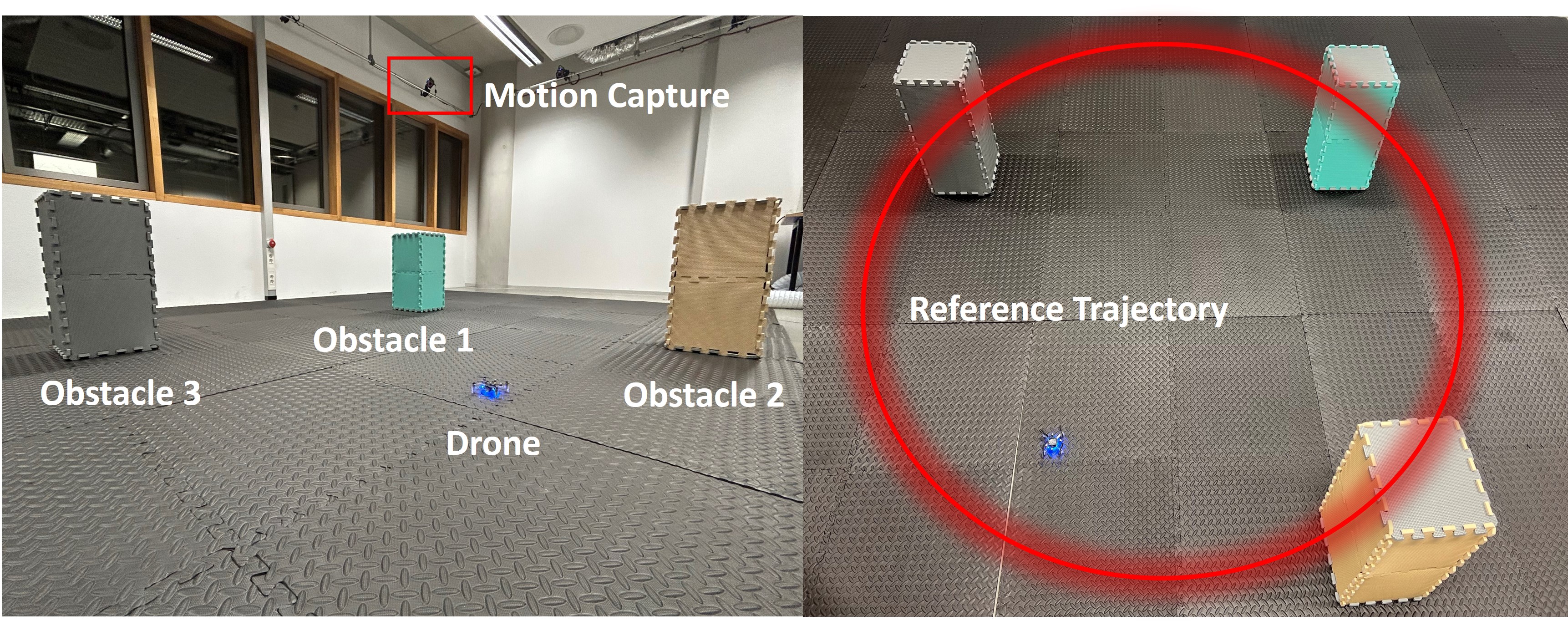}}
    \caption{Experiment: Left - Lateral view with three obstacles and one drone in the environment, using motion capture for a real-time object position. Right - Top-down view of the drone following a circular reference trajectory (red-shaded circle) while avoiding three obstacles.}
    \label{Environment_Exp}
\end{figure}

In Fig.~\ref{Ex_Safe_Tracking_Performance}, a successful safe stabilization by the proposed universal formula is achieved. Next, similar to our simulation results, a quantitative evaluation of the PID-CBF and universal formula approach is presented in Fig.~\ref{Ex_Quantified_Performance}. This analysis reveals that both methods perform equally well in a real-world experiment context. They both exhibit error convergence in the case that no obstacles are encountered, while consistently upholding safety guarantees. However, as claimed in Section~\ref{Universal_with_Disturbance}, we can only guarantee the forward invariance of a smaller safety set $\mathcal{C}_{\bm{\omega}}$ when disturbances are intruded, along with the convergence of a neighboring region of the origin. This fact can be explained from the observation that the curves $\mathbf{e}_{xy}$, $\mathbf{e}_{z}$, $h_{xy}$, and $h_{z}$ consistently maintain a non-zero distance from zero.

As for the control input, we set the parameters as follows.
\begin{equation}
    \begin{split}
       0 &=f_{\min}\leq f\leq f_{\max}=0.5 \mathrm{N},\\
        -0.5 \mathrm{rad}&=\phi_{\min}\leq \phi\leq \phi_{\max}=0.5 \mathrm{rad},\\
        -0.5 \mathrm{rad}&=\theta_{\min}\leq \theta\leq \theta_{\max}=0.5 \mathrm{rad}.
    \end{split}
\end{equation}
Finally, the control input is visualized in Fig.~\ref{Ex_Control_Input_Comparison}. Notably, the control input consistently lies in the range of the control limit, highlighting the robustness and reliability of our approach in real-world scenarios.
\begin{figure*}[tp]
 \centering
    \makebox[0pt]{%
    \includegraphics[width=5.5in]{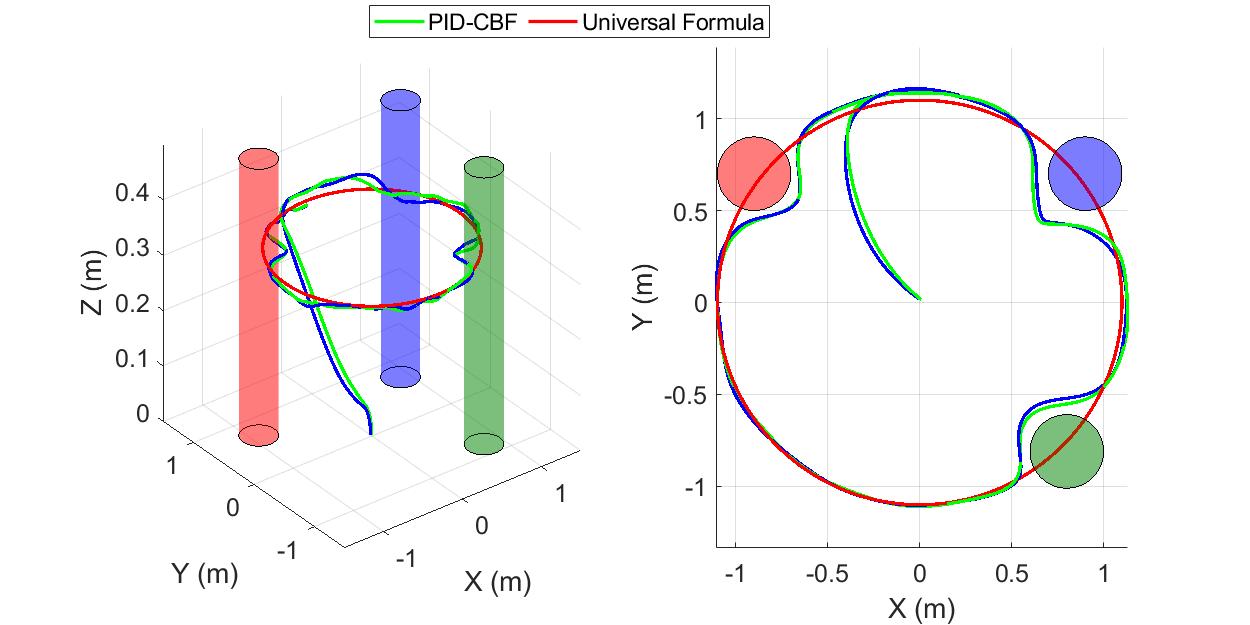}}
    \caption{Experiment - Comparative analysis of safe tracking performance: PID-CBF vs. Universal Formula method.}
    \label{Ex_Safe_Tracking_Performance}
\end{figure*}
\begin{figure*}[tp]
 \centering
    \makebox[0pt]{%
    \includegraphics[width=5.5in]{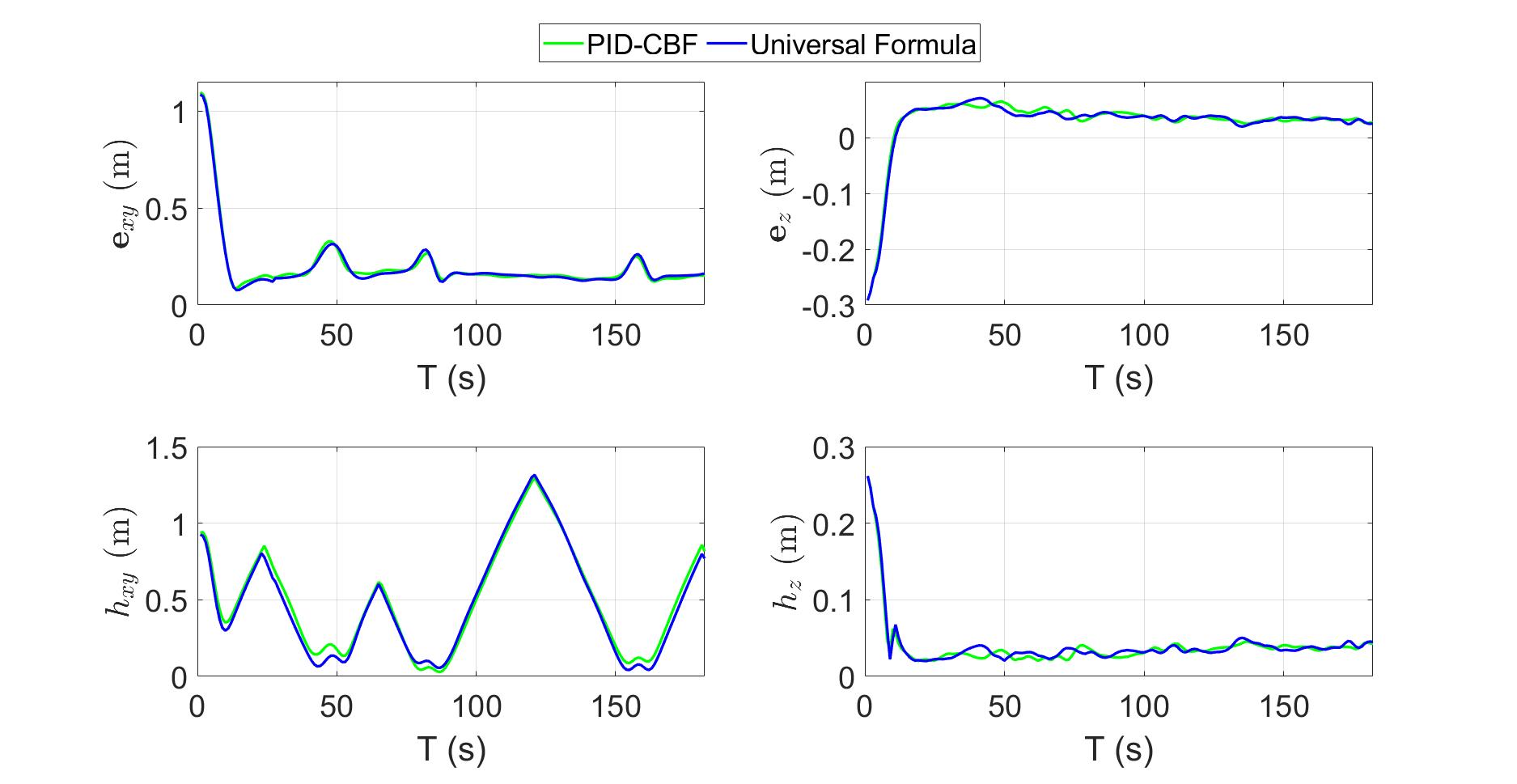}}
    \caption{Expereiment - Tracking errors and safety evaluations of four algorithms in ensuring safe tracking in both $\text{X-Y}$ and $\text{Z}$ Axes.}
    \label{Ex_Quantified_Performance}
\end{figure*}
\begin{figure*}[tp]
 \centering
    \makebox[0pt]{%
    \includegraphics[width=5.5in]{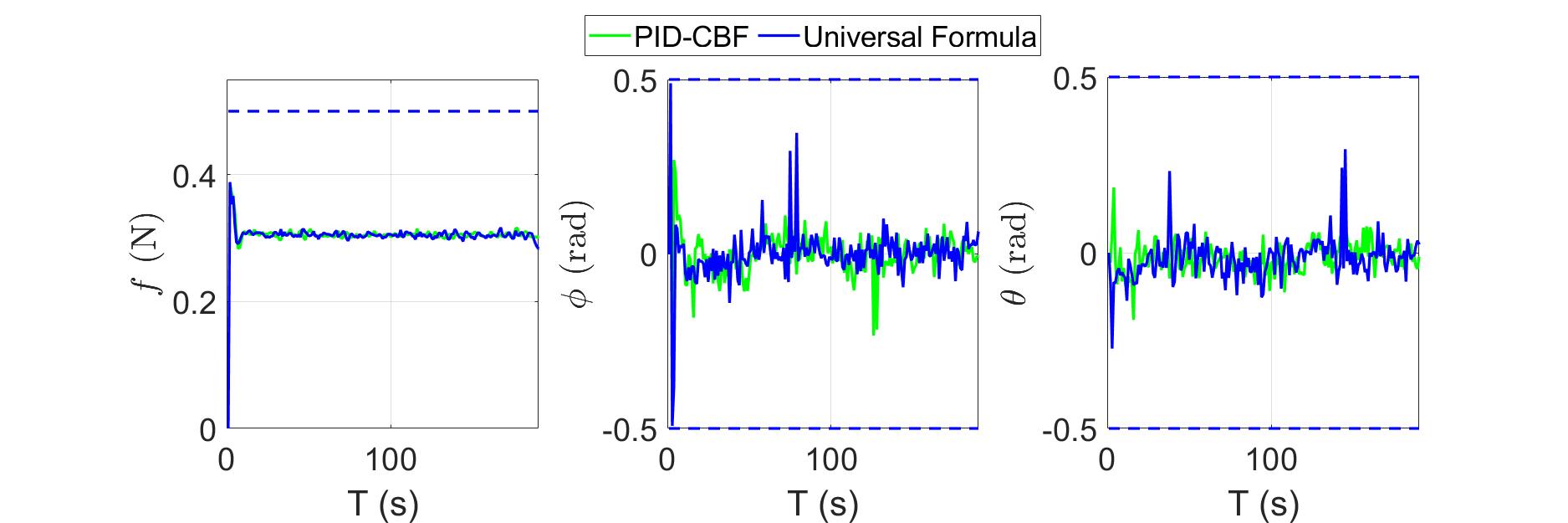}}
    \caption{Experiment - Control input behavior of two different algorithms: PID-CBF vs. Universal Formula method.}
    \label{Ex_Control_Input_Comparison}
\end{figure*}
\section{Conclusion}
This paper introduces a universal formula that leverages CLF and CBF conditions to address safe stabilization problems for quadrotors, both in cases of compatibility and incompatibility. This formula offers an attractive alternative to optimization-based approaches, eliminating the need for solving on-board optimization. When facing scenarios involving a single CLF constraint but multiple CBFs, we transform the safe stabilization problem by approximating multiple CBFs with a single, albeit more conservative CBF condition, which still yields commendable performance. Furthermore, we enhance our universal formula by incorporating the theories of ISS and ISSf, enabling the accommodation of input disturbances. This modification results in a more robust safe stabilization control while maintaining a rapid computational speed and is suitable for on-board implementation. 
Moreover, in order to effectively address input constraints, we introduce a projection-based strategy. This method involves aligning the control input, computed through the universal formula, with the closest point available within the control input domain, thereby guaranteeing compliance with the predefined constraints. Our approach is validated through both simulations and real-world experiments, demonstrating its effectiveness and highlighting its fast computation advantage. The results confirm that our solution exhibits significantly faster execution, making it well-suited for on-board implementation while maintaining a high level of safe tracking performance.
\bibliographystyle{IEEEtran}
\bibliography{Universal_Drone}
\end{document}